\newtheorem{asmp}{Assumption}
\newcommand{\delequal}{\stackrel{\triangle}{=}}
\def\preamble{}
\tikzset{
    -Latex,auto,node distance =1 cm and 1 cm,semithick,
    state/.style ={ellipse, draw, minimum width = 0.7 cm},
    point/.style = {circle, draw, inner sep=0.04cm,fill,node contents={}},
    bidirected/.style={Latex-Latex,dashed},
    el/.style = {inner sep=2pt, align=left, sloped}
}
\declaretheorem{theorem}
\declaretheorem{lemma}
\declaretheorem{corollary}
\theoremstyle{definition}
\declaretheorem{definition}
\declaretheorem{example}
\newcommand{\doo}{{\it do}}
\DeclareSymbolFont{symbolsC}{U}{txsyc}{m}{n}
\DeclareMathSymbol{\boxright}{\mathrel}{symbolsC}{128}
\def\*#1{\mathbf{#1}}
\theoremstyle{definition}
\newcommand{\xdasharrow}[2][->]{
% correct vertical setting by egreg:
% http://tex.stackexchange.com/a/59660/13304
\tikz[baseline=-\the\dimexpr\fontdimen22\textfont2\relax]{
\node[anchor=south,font=\scriptsize, inner ysep=1.5pt,outer xsep=8pt](x){#2};
\draw[shorten <=3.4pt,shorten >=3.4pt,dashed,#1](x.south west)--(x.south east);
}
}
\def\ddefloop#1{\ifx\ddefloop#1\else\ddef{#1}\expandafter\ddefloop\fi}
\def\ddef#1{\expandafter\def\csname bb#1\endcsname{\ensuremath{\mathbb{#1}}}}
\def\ddef#1{\expandafter\def\csname c#1\endcsname{\ensuremath{\mathcal{#1}}}}
\def\ddef#1{\expandafter\def\csname v#1\endcsname{\ensuremath{\boldsymbol{#1}}}}
\def\ddef#1{\expandafter\def\csname v#1\endcsname{\ensuremath{\boldsymbol{\csname #1\endcsname}}}}
\newcommand*{\indep}{%
  \mathbin{%
    \mathpalette{\@indep}{}%
  }%
}
\newcommand*{\nindep}{%
  \mathbin{%                   % The final symbol is a binary math operator
    \mathpalette{\@indep}{\not}% \mathpalette helps for the adaptation
                               % of the symbol to the different math styles.
  }%
}
\newcommand*{\@indep}[2]{%
  % #1: math style
  % #2: empty or \not
  \sbox0{$#1\perp\m@th$}%        box 0 contains \perp symbol
  \sbox2{$#1=$}%                 box 2 for the height of =
  \sbox4{$#1\vcenter{}$}%        box 4 for the height of the math axis
  \rlap{\copy0}%                 first \perp
  \dimen@=\dimexpr\ht2-\ht4-.2pt\relax
      % The equals symbol is centered around the math axis.
      % The following equations are used to calculate the
      % right shift of the second \perp:
      % [1] ht(equals) - ht(math_axis) = line_width + 0.5 gap
      % [2] right_shift(second_perp) = line_width + gap
      % The line width is approximated by the default line width of 0.4pt
  \kern\dimen@
  {#2}%
      % {\not} in case of \nindep;
      % the braces convert the relational symbol \not to an ordinary
      % math object without additional horizontal spacing.
  \kern\dimen@
  \copy0 %                       second \perp
} 
\title{Distribution-consistency Structural Causal Models}
\author{%
  Heyang Gong %\thanks{Use footnote for providing further information
    % about author (webpage, alternative address)---\emph{not} for acknowledging
    % funding agencies.}
    \\
  % Department of Computer Science\\
  Kuaishou Inc., Beijing, China\\
  % Pittsburgh, PA 15213 \\
  \texttt{gongheyang03@kuaishou.com} \\
  % examples of more authors
  \And
  Chaochao Lu \\
  Shanghai AI Laboratory, Shanghai, China \\
  % Address \\
  \texttt{luchaochao@pjlab.org.cn} \\
  \AND
  Yu Zhang \\
  % Affiliation \\
  Indiana University, Indiana, US \\
  % Address \\
  \texttt{yzh10@iu.edu} \\
  % \And
  % Coauthor \\
  % Affiliation \\
  % Address \\
  % \texttt{email} \\
  % \And
  % Coauthor \\
  % Affiliation \\
  % Address \\
  % \texttt{email} \\
}
\begin{document}
\begin{CJK}{UTF8}{gbsn}  
\maketitle

\begin{abstract}
In the field of causal modeling, potential outcomes (PO) and structural causal models (SCMs) stand as the predominant frameworks. However, these frameworks face notable challenges in practically modeling counterfactuals, formalized as parameters of the joint distribution of potential outcomes. Counterfactual reasoning holds paramount importance in contemporary decision-making processes, especially in scenarios that demand personalized incentives based on the joint values of $(Y(0), Y(1))$. This paper begins with an investigation of the PO and SCM frameworks for modeling counterfactuals. Through the analysis, we identify an inherent model capacity limitation, termed as the ``degenerative counterfactual problem'', emerging from the consistency rule that is the cornerstone of both frameworks. To address this limitation, we introduce a novel \textit{distribution-consistency} assumption, and in alignment with it, we propose the Distribution-consistency Structural Causal Models (DiscoSCMs) offering enhanced capabilities to model counterfactuals. Furthermore, we provide a comprehensive set of theoretical results about the ``Ladder of Causation'' within the DiscoSCM framework. To concretely reveal the enhanced model capacity, we introduce a new identifiable causal parameter, \textit{the probability of consistency}, which holds practical significance within DiscoSCM alone, showcased with a personalized incentive example. We hope it opens new avenues for future research of counterfactual modeling, ultimately enhancing our understanding of causality and its real-world applications. 

% TODO:
% 在广泛的场景定义清楚 probability of consistency 问题。

\end{abstract}

\section{Introduction}

In the field of causal modeling, there are two primary frameworks: potential outcomes (PO) \citep{{rubin1974estimating, neyman1923application}} and structural causal models (SCMs) \citep{pearl2009causality}. In the PO framework, potential outcomes are generally recursively defined by their direct causes \citep{richardson2013single}, while in the SCM framework, counterfactual variables are defined as the solution of submodels induced by an intervention. Potential outcomes and counterfactual variables represent the core concepts in these two theoretically equivalent frameworks \citep{pearl2009causality, pearl2011consistency,  weinberger2023comparing},  each with different emphases.  It is important to note that in this paper,  our focus is neither on dissecting the specific differences nor on comparing the relative merits of the two frameworks. Instead, we selectively integrate concepts from both to enhance understanding and clarity.\footnote{For a detailed comparison, we recommend reading \citet{imbens2020potential} and other related sources \citep{markus2021causal, weinberger2023comparing}.} Counterfactual modeling refers to modelling the joint distribution of potential outcomes and valuating quantities related to this joint distribution.\footnote{The concept of counterfactuals is marked by ambiguity. While scholars like Judea Pearl define them as probabilistic answers to ``what-if'' scenarios, the term is broadly used in areas like counterfactual machine learning or reasoning, covering various techniques in causal analysis.}

Causal queries are categorized into three layers - associational, interventional, and counterfactual, collectively known as the Pearl Causal Hierarchy (PCH) \citep{pearl2019seven} and formalized with Layer valuations \citep{bareinboim2022pearl}. Counterfactuals, i.e. Layer 3 valuations, are intrinsically linked to the joint distribution of counterfactual outcomes and play a crucial role in personalized decision-making processes \citep{mueller2023personalized}. For instance, the probability of a counterfactual event $(Y(t)=y, Y(t')=y')$ is of significant interest, where $T$ denotes the decision to issue a discount coupon and $Y$ represents the act of making a purchase. This counterfactual probability aids in determining which individuals should be given a discount coupon. However, although counterfactuals are essentially important research questions, they are notoriously challenging to answer.

Surprisingly, we discover that a fundamental assumption in causal inference, \textit{the consistency rule} \citep{cole2009consistency, vanderweele2009concerning} \footnote{It is an assumption in the PO framework, but it serves as a theory within the SCM framework\citep{pearl2011consistency}, hence we prefer to refer it as to the consistency rule.}, which allows us to identify causal quantities, might be the exact reason we encounter difficulties in modeling counterfactuals. For example, in the binary treatment scenario, we consider the counterfactual distribution \( P(Y(0), Y(1)) \). For those who have an observed treatment $T=t$ and outcome $Y=y$, $Y(t)$ is equal to a deterministic constant $y$ whether $t$ equals 0 or 1, according to the consistency rule. This directly makes one component of the random vector $(Y(0), Y(1))$ degenerate to a constant, resulting in additional intricacies for computation, e.g., calculating the probability of being a complier $P(Y(0)=0, Y(1)=1)$. This leads to the \textit{degenerative counterfactual problem}, which is exactly the reason of inability to directly model the joint distribution of counterfactual outcomes. In fact, there are objections rooted in the philosophy of determinism to potential responses as having real existence \citep{dawid2023personalised}, which is a premise for consistency to make sense.

% To overcome this obstacle and go beyond determinism, we propose the \textit{distribution-consistency assumption} which is a 替代和放松 of  the consistency assumption. 发现这种放松条件下, 依旧有同样强大 identification 的能力, 例如 \ref{lemma:ident}. 进一步我们提出了 a distribution-consistency structural causal model that aligns with it, 建立了此框架下的 Layer valuation 理论来攀登 the Ladder of causation. 我们发现 at Layer 1 and 2, valuations in DiscoSCM and SCM 是等价, but at layer 3 ususally differ, scm 中的 layer 3 valution 是 DiscoSCM 中在 counterfactual noise 等于 factual noise 的假设下才是也是等价。

To address this challenge and move beyond determinism, we propose the ``distribution-consistency assumption''. This is a relaxation of the consistency rule, while preserving its strength in causal identification. In alignment with this new assumption, we introduce the Distribution-consistency Structural Causal Models (DiscoSCMs), and further establish a theory on Layer valuations within this framework to tackle the ``Ladder of Causation'' more practically.
% , grounded in the principle that ``individual-level valuation as primitive and population-level valuation as derivative.'' 
We reveal and prove that at Layers 1 and 2, the valuations in DiscoSCMs and SCMs are equivalent.
However, their valuations at Layer 3 are generally different, except for some specific circumstances, e.g., when the counterfactual noise equals the factual noise. To showcase the enhanced capabilities of this framework in counterfactual modeling, we introduce a novel causal parameter, \textit{the probability of consistency}. This is a counterfactual valuation that holds practical significance only within DiscoSCMs, demonstrated with a concrete personalized incentive example. 

% \gong{必不可少, more detialed version of abstract.}

To sum up, our main contributions are as follows. 

\begin{itemize}
    \item We identify the degenerative counterfactual problem in existing causal frameworks based on the consistency rule, alongside with a concise formulation of them and a relevant scenario for industrial decision-making. 
    \item We propose the distribution-consistency assumption and theoretically reveal its implications for causal identification.
    \item We propose the DiscoSCM framework to address the degenerative counterfactual problem, which not only serves as an extension of both PO and SCM framework, but also circumvents the concerns on real existence of potential responses.    
    \item  We establish a comprehensive theory for Layer valuations within DiscoSCM, incorporating a novel lens on individual and population-level considerations. The enhanced capacity of modeling counterfactuals is validated and illustrated with both synthetic and practical examples. 
\end{itemize}

% 1)  2) ,  3) ...

% \gong{写下3/4 点贡献}

% This paper unfolds as follows:  In Section 2, we provide a succinct review of the conventional causal frameworks and the motivating scenarios for industrial decision-making. Section 3 underscores the significance of counterfactual modeling, highlighting how the consistency assumption can lead to a degeneration issue. To overcome this challenge, Section 3 introduces the distribution-consistency assumption as an alternative to the traditional consistency rule. Following this, Section 4 presents the DiscoSCM framework, which is aligned with the new assumption. A distinguishing aspect of our framework is the clear differentiation between individual-level valuations, treated as foundational elements, and population-level valuations, regarded as derivative constructs. Section 5 delves into the core results of Layer valuations, showcasing their application through both theoretical and practical examples, including the personalized incentive scenario. Finally, Section 6 wraps up the paper with a concise summary, highlighting our main contributions and suggesting directions for future research in this exciting domain.

\section{Preliminaries} 

\subsection{Causal Frameworks Based on the Consistency Rule}
\label{appendix:framework}

There are two main frameworks for causal models: potential outcomes (PO) \citep{rubin1974estimating, neyman1923application} and structural causal models (SCMs) \citep{pearl2009causality}, both of which are theoretically equivalent frameworks based on the consistency rule \citep{pearl2011consistency}. Notably, they face challenges even when computing simple counterfactuals, such as analyzing the correlation between counterfactual outcomes with and without aspirin \citep{pearl2022direct}. The PO framework can be interpreted as experimental approach on causality with clear individual semantics \citep{weinberger2023comparing}, while the SCM framework emphasizes  the underlying causal mechanisms and derives a three-layer of causal information: associational, interventional and counterfactual \cite{pearl2009causality, bareinboim2022pearl}.

Following the formulation in \citet{imbens2020potential}, the PO approach begins with a population of units. There is a treatment/cause $T$ that can take on different values for each unit. Corresponding to each treatment value, a unit is associated with a set of potential outcomes, represented as $Y(t)$. For example, in the simplest case with a binary treatment there are two potential outcomes, $Y(0)$ and $Y(1)$. Only one of these potential outcomes, corresponding to the treatment received,  can be observed. Formally,
\begin{equation}
\label{eq:y_obs}
Y= T Y(1) + (1 - T) Y(0).
\end{equation}
This equation is direct derivation of the consistency rule: 
\begin{asmp}[\textbf{Consistency} \citep{angrist1996identification, imbens2015causal}] The potential outcome $Y(t)$ precisely matches the observed variable $Y$ given observed treatment $T=t$, i.e.,
\begin{align}
\label{assump:consist}
    T=t \Rightarrow Y(t) = Y.
\end{align}
\end{asmp}
The causal effect is related to the comparison between potential outcomes, of which at most one corresponding realization is available, with all the others missing. \citet{holland1986statistics} refers to this missing data nature as the ``fundamental problem of causal inference''. This framework is distinguished by its explicit semantics at the individual level. It is reflected in the stable unit treatment value assumption (SUTVA) \citep{rubin1974estimating, neyman1923application}, which ensures that there is no interference among individuals and the treatment given to one unit does not influence the outcomes of other units. 
% The second assumption concerns the source of variability in potential outcome $Y(t)$, typically attributed to the selection of subject in the population.

% We refer to the formalization of SCM framework in \citep{bareinboim2022pearl}, which typically uses $X$ instead of $T$ to denote treatment/cause. 
The framework of \textit{structural causal models} (SCMs) is presented as follows.  

\begin{definition}[\textbf{Structural Causal Models} \citep{pearl2009causality}]
    \label{def:scm} 
    A structural causal model is a tuple $\langle \mathbf{U}, \mathbf{V}, \mathcal{F}\rangle$, where 
    \begin{itemize}
        \item $\mathbf{U}$ is a set of background variables, also called exogenous variables, that are determined by factors outside the model, and $P(\cdot)$ is a probability function defined over the domain of $\mathbf{U}$; 
        \item $\mathbf{V}$ is a set $\{V_1, V_2, \ldots, V_n\}$ of (endogenous) variables of interest that are determined by other variables in the model -- that is, in $\mathbf{U} \cup \mathbf{V}$;
        \item $\mathcal{F}$ is a set of functions $\{f_1, f_2, \ldots, f_n\}$ such that each $f_i$ is a mapping from (the respective domains of) $U_{i} \cup Pa_{i}$ to $V_{i}$, where $U_{i} \subseteq \*{U}$, $Pa_{i} \subseteq \mathbf{V} \setminus V_{i}$, and the entire set $\mathcal{F}$ forms a mapping from $\mathbf{U}$ to $\mathbf{V}$. That is, for $i=1,\ldots,n$, each $f_i \in \mathcal{F}$ is such that 
        $$v_i \leftarrow f_{i}(pa_{i}, u_{i}),$$ 
        i.e., it assigns a value to $V_i$ that depends on (the values of) a select set of variables in $\*U \cup \*V$.
    \end{itemize}
\end{definition}
% \cl{Add several sentences to explain why to introduce submodel or interventional SCM?}
SCMs are instrumental in modeling the underlying causal mechanisms behind phenomena, with these mechanisms being essential for answering causal questions. A typical example of such an inquiry might be: ``What if I take aspirin? will my headache be cured?'' \citep{pearl2019seven} A modification of an SCM allows for natural valuations of these kinds of quantities, which we will define as follows.
\begin{definition}[\textbf{Submodel-``Interventional SCM''} \citep{pearl2009causality}]
    Consider an SCM $\langle \mathbf{U}, \mathbf{V}, \mathcal{F}\rangle$, with a set of variables $\*X$  in $\*V$, and a particular realization $\*x$ of $\*X$. The $\doo(\*x)$ operator, representing an intervention (or action), modifies the set of structural equations $\mathcal{F}$ to $\mathcal{F}_{\*x} := \{f_{V_i} : V_i \in \*V \setminus \*X\} \cup \{f_X \leftarrow x : X \in \*X\}$ while maintaining all other elements constant. Consequently, the induced tuple $\langle \mathbf{U}, \mathbf{V}, \mathcal{F}_{\*x}\rangle$ is called as \textit{Intervential SCM} , and potential outcome $\*Y(\*x)$ (or denoted as $\*Y_{\*x}(\*u)$) is defined as the set of variables $\*Y \subseteq \*V$ in this submodel.
\end{definition}

Formally, an SCM gives valuation for associational, interventional and counterfactual quantities in the Pearl Causal Hierarchy (PCH) as follows.

\begin{definition}[\textbf{Layer Valuation} \citep{bareinboim2022pearl}]
\label{def:l3-semantics}
An SCM $\langle \*U, \*V, \mathcal{F}\rangle$ induces a family of joint distributions over potential outcomes $\*Y(\*x), \ldots, \*Z({\*w})$, for any $\*Y$, $\*Z$, $\dots$, $\*X,$ $\*W \subseteq \*V$: 
\begingroup\abovedisplayskip=0.5em\belowdisplayskip=0pt
\begin{align}\label{eq:def:l3-semantics}
    P(\*{y}_{\*{x}},\dots,\*{z}_{\*{w}}) =
\sum_{\substack{\{\*u\;\mid\;\*{Y}({\*x})=\*{y},\\ \dots,\; \*{Z}({\*w})=\*z\}}}
    P(\*u).
\end{align}
\endgroup
is referred to as Layer 3 valuation. In the specific case involving only one intervention, e.g., $do(\*x)$:
\begingroup\abovedisplayskip=0.5em\belowdisplayskip=0pt
\begin{align}
    \label{eq:def:l2-semantics}
    P({\*y}_{\*x}) = \sum_{\{\*u \mid {\*Y}({\*x})={\*y}\}}
    P(\*u),
\end{align}
\endgroup
is referred to as Layer 2 valuation. The even more specialized case when $\*X$ is empty:
\begingroup\abovedisplayskip=0.5em\belowdisplayskip=0pt
\begin{align}
    \label{eq:def:l1-semantics}
    P({\*y}) = 
    \sum_{\{\*u \mid {\*Y}={\*y}\}}
    P(\*u).
\end{align}
\endgroup
is referred to as Layer 1 valuation. Here, $\*y$ and $\*z$ represent the observed outcomes, $\*x$ and $\*w$ the observed treatments, $\*u$ the noise instantiation, and we denote $\*y_{\*x}$ and $\*z_{\*w}$ as the realization of their corresponding potential outcomes. 
\end{definition}

Each SCM induces a causal diagram $\mathcal{G}$, where every $V_i \in \*V$ is a vertex and there is a directed arrow $(V_j \rightarrow V_i)$ for every $V_i \in \*V$ and $V_j \in Pa(V_i)$.  In the case of acyclic diagrams, which correspond to recursive SCMs, \doo-calculus can be employed to completely identify all the valuations at Layer 2 \citep{pearl1995causal, huang2012pearl}. However, calculating counterfactuals at Layer 3 is generally far more challenging compared to Layers 1 and 2. This is because it essentially requires modeling the joint distribution of potential outcomes (e.g., the joint distribution of the potential outcomes with and without aspirin). Unfortunately, we often lack access to the underlying causal mechanisms and only have observed traces of them. This limitation renders the practical use of Eq.\eqref{eq:def:l3-semantics} highly restricted for computing counterfactuals.

\subsection{A Motivating Scenario of Personalized Incentives}

In online platforms, deploying incentives such as coupons or coins is a fundamental strategy to enhance user engagement and revenue. Nonetheless, these incentives, if not judiciously allocated, can be costly and may result in a suboptimal return on investment (ROI) \citep{goldenberg2020free, ai2022lbcf}. The core challenge resides in tailoring the incentive amount for each individual user within a predefined budget, taking into account the varied responses of users to these incentives. Our research, set in the background of a large-scale video platform, seeks to optimize returns within a constrained budget by leveraging data-driven, personalized incentive strategies. This task involves deciphering the impact of incentives on returns, framed as treatments and outcomes. We represent features by $\mathbf{X}$, treatment by $T$, and outcome by $Y$. A simplified illustrative example is as follows.

\begin{example}
\label{eg:incentive}
Consider a causal model tailored for personalized incentives on a population $U$, encompassing observable variables: $S$, $\mathbf{X}$, $T$, and $Y$, as depicted in the causal diagram (Figure \ref{fig:incentive}). The causal mechanisms for each user $U=u$ are described below:\footnote{In this paper, $U=u$ is interpreted as the index of a user, but actually it can be a learned unobservable causal representation in order to facilitate computations, hence denoted with gray color node in Fig. \ref{fig:incentive}.}
\begin{enumerate}
    \item $S$:  Assigns users to one of three experiment groups. The random group ($S=0$) receives incentives based purely on chance; the pure strategy group ($S=1$) has incentives tailored according to specific user characteristics; and the mixed strategy group ($S=2$) combines random allocation with user-specific strategies.
    \item $\mathbf{X}$: Denotes the pre-treatment features of the user that influence both the treatment and outcome. This includes demographic details, historical engagement levels, and other relevant factors. 
    \item $T$: A binary incentive treatment variable. For users in the random group ($S=0$), this decision is made with uniform probability; For users in the pure strategy group ($S=1$), the incentive is a deterministic function of the pre-treatment features; In the mixed strategy group ($S=2$), the decision is influenced by the user's features but retains some randomness.
    \item $Y$: The outcome variable of the user's reaction to the incentive, e.g. conversion, purchase or retention.
\end{enumerate}
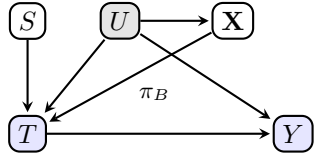
\begin{figure}[http]
\centering
    \begin{tikzpicture}[->, >=stealth, shorten >=1pt, auto, thick, node distance=1.5cm, main node/.style={rectangle, rounded corners, draw}]
    \node[main node] (T) {$S$};
    \node[main node, opacity=0.9, fill=blue!10] (A) [below of=T] {$T$};
    \node[main node, opacity=0.9, fill=blue!10] (Y) [right=3cm of A] {$Y$}; % Increase distance between A and Y
    % \node[main node, dashed, opacity=0.8, fill=gray!30] (Z) [above right=1cm and 0.7cm of A] {$U$}; % Position Z above and to the left of A
    \node[main node, opacity=0.9, fill=gray!20] (Z) [above right=1cm and 0.7cm of A] {$U$}; % Position Z above and to the left of A

    \node[main node] (X) [right of=Z] {$\mathbf{X}$};
    \path[every node/.style={font=\sffamily\small}]
        (T) edge node {} (A)
        (Z) edge node {} (A)
        (Z) edge node {} (X)
        (Z) edge node {} (Y)
        (A) edge node {} (Y)
        (X) edge node {$\pi_B$} (A);
        % (T) edge [opacity=0.5] node {} (A)
        % (Z) edge [opacity=0.5] node {} (A)
        % (Z) edge node {} (X)
        % (Z) edge node {} (Y)
        % (A) edge node {} (Y)
        % (X) edge [opacity=0.5] node {} (A);
        % (X) edge [dashed, opacity=0.5] node [midway, above, sloped] {$\pi_B$} (A); % Dashed edge from X to A
        
    \end{tikzpicture}
    \caption{Causal model for personalized incentives: this diagram illustrates the causal relationships among group assignment $S$, incentive treatment $T$, pre-treatment features $\mathbf{X}$, and the outcome variable $Y$. The model integrates a user representation $U$, capturing all relevant endogenous information (excluding $T$) that determines the Layer valuations regarding to $(T, Y)$. $\pi_B$ stands for a personalized incentive policy that is designed to optimize returns under total  budget constraint $B$. 
    % The variable $S, U, \*X$ 共同独特复杂的决定了 treatment 的取值, 所有用 opacity=0.5 的曲线。\cl{Explain the details of the figure}
    }
\label{fig:incentive}
\end{figure}
In a practically motivating setting, $T$ encompasses $K$ distinct treatments $\{1, 2, ..., K\}$, with \( T=0 \) representing the control group. For each user \( u \) and treatment \( j \), we denote by \( c_{uj} \) the treatment cost, and by \( B \) the overall budget constraint. Our objective is to optimize the treatment selection policy 
$$\pi_B: \*x_u \to t,$$ 
so as to strategically assign treatments to maximize the total return within the budget $B$. This policy hinges on estimating the conditional average treatment effect (CATE) for each user, i.e., 
\begin{equation*}
\tau_{j}(\mathbf{x}_u)=\mathbb{E}[Y(j)-Y(0)|\mathbf{X}=\mathbf{x}_u], 
\end{equation*}
where $\mathbf{x}_u$ represents observed features for user $u$. Note that in subsequent sections, for the ease of understanding, we prefer to use terminology specific to this motivating scenario.
\end{example}

% To evaluate CATE estimators, we consider two scenarios: one with known ground-truth CATEs (as in synthetic or semi-synthetic datasets) and the other without them (in real-world datasets). In the former case, we use the metric, precision in estimation of heterogeneous effects (PEHE), which measures the mean squared error between estimated and true CATEs. In the latter case, we assess the performance using the uplift curve, which ranks subjects by estimated CATEs and evaluates the normalized total gain of selected users. This curve is quantified by the AUUC (area under uplift curve) that serves as a basic metric. 

% In subsequent sections, for the ease of understanding, we prefer to use terminology specific to this motivating scenario, rather than terms used in medical, social science, or other domain-specific contexts.

\section{The Degenerative Counterfactuals Problem}

The challenge of identifying individuals with a desired response pattern is pervasive across diverse domains, including industry \citep{ai2022lbcf}, marketing \citep{tsai2009customer}, and health science \citep{sarvet2023perspectives, mueller2023perspective},  which underscores the importance of estimating counterfactuals. It is particularly relevant in providing personalized incentives (e.g., coupons at Amazon or discounts at Uber) used by online platforms to drive user engagement and increase revenue. In scenarios involving binary treatments and outcomes, users are typically classified into four behavioral categories \citep{angrist91g, balke1997bounds}: compliers, always-takers, never-takers, and defiers, based on their reactions to different treatments \footnote{Compliers will respond positively only if encouraged ($Y(0)=0, Y(1)=1$), always-takers always respond positively regardless of encouragement ($Y(0)=1, Y(1)=1$), never-takers always respond negatively ($Y(0)=0, Y(1)=0$), and defiers respond negatively if encouraged and positively otherwise ($Y(0)=1, Y(1)=0$).}. Hence, estimating the probability of counterfactual events becomes vital. 

Uplift modeling \citep{zhao2019uplift, diemert2018large, gutierrez2017causal, zhang2021unified}, commonly viewed as a PO approach, is often used to address this challenge. This method essentially involves estimating the CATE $\tau(\*x_u) = \mathbb{E}[Y(1)-Y(0)|\*X=\*x_u]$, and then selects the users $u$ with high $\tau$ values. It can be seen as an A/B-test-based heuristic \citep{li2019unit} that estimates certain specific parameter of the joint distribution of potential outcomes, instead of directly modeling it. As such, this prevailing approach cannot account for the counterfactual nature of the desired behavior. Although some recent work has attempted to explore the nature within the SCM framework, their resulting counterfactual expressions are generally not identifiable \citep{bareinboim2022pearl} and thus one still resorts to heuristic approaches \citep{li2019unit, li2022unit}. Specifically, they derive tight bounds for the counterfactual expressions based on both experimental and observational data, and then use the midpoint of the bounds as a unit selection criterion.
% However, one has to resorts to heuristic approaches, since  counterfactual expressions are not identifiable \citep{bareinboim2022pearl}. They derives tight bounds for these expressions based on experimental and observational data, and then use the midpoint of the bounds as a selection criterion.
% Hence, one resorts to heuristic approaches \cl{Add ref}, aiming to obtain tight bounds for these expressions based on experimental and observational data, and then use the midpoint of the bounds as a selection criterion. To Nonetheless
Nevertheless, heuristic methods typically yield only sub-optimal solutions and often lack theoretical guarantees. This highlights the necessity for an approach to modelling the joint distribution of potential outcomes ($Y(0), Y(1)$), which is the essence of counterfactual reasoning. It can radically address the limitations of the heuristic methods. However, an intriguing question emerges: Why has the existing literature typically avoided direct modeling of the joint distribution of potential outcomes?
 
% Why is the joint distribution of potential outcomes not directly modeled in existing literatures?

% \begin{equation}
% \label{eq:unit}
% \arg\max_\*x \beta P(\text{Complier}|\*x) + \gamma P(\text{Always-taker}|\*x) +\theta P(\text{Never-taker}|\*x) +\delta P(\text{Defier}|\*x)  
% \end{equation}
% where the search task involves designing an algorithm to select individuals based on this function and their observed characteristics. However, counterfactual expressions are generally not identifiable in typical cases. The study proposes a heuristic approach to derive tight bounds for these expressions from experimental and observational data, using the midpoint as a selection criterion. This highlights the need for a model that can capture the joint distribution of potential outcomes ($Y(0), Y(1)$), addressing both the essence of counterfactual reasoning and the limitations of indirect methods like midpoint bounds. This leads to an essential question: Why not model the joint distribution of potential outcomes directly?

To answer this question, it is crucial to understand that counterfactual queries inherently operate at the individual level. For example, the question - ``If someone took a drug and died, would they have survived had they not taken it?'' - deals with individual-specific outcomes. In the case of binary treatment, according to the consistency rule, for each $u$, either $Y_u(0)$ or $Y_u(1)$ aligns with an observed outcome $y$, depending on the treatment $t$. Consequently, the joint distribution of $(Y_u(0), Y_u(1))$ becomes degenerate.\footnote{In the PO and SCM frameworks, $Y_u(t)$ for $t=0, 1$ is viewed as a fixed value, leading to a Dirac distribution and degeneracy. Outside these frameworks, critiques like \citet{dawid2023personalised} challenge the existence of such potential responses. Aligning with this perspective, we posit that $Y_u(t)$ should be considered a random variable, acknowledging that degeneracy persists under the consistency rule.} Similarly, for a sub-population in which each unit has the observed treatment $T=t$ and outcome $Y=y$ whether $t$ equals 0 or 1, $Y(t)$ is equal to a deterministic constant $y$, according to the consistency rule. This directly causes that one component of the random vector $(Y(0), Y(1))$ degenerates to a constant. We refer to the above phenomena as \textbf{the degenerative counterfactual problem}. This degeneration imposes additional intricacies, within conventional causal frameworks, for computing counterfactuals, e.g., the probability of being a complier $P(Y(0)=0, Y(1)=1)$.

This dilemma prompts a crucial question:  Should we adhere to the consistency rule, or is there a relaxation of this rule to avoid the degeneration issue? Consider a toy exam scenario: ``If an individual with average ability scores exceptionally high on a test due to good fortune, what score would the individual achieve had he retaken the test under the identical conditions? An exceptionally high score or an average one?'' Intuitively, predicting an average score seems more reasonable. Similarly, in the context of personalized incentive scenario: ``If a high amount of incentives and a high retention for a user are observed, back to the past, what retention would we expect had this user been given the high incentive again?'' Intuitively, predicting a high retention with a probability of less than 100\% makes more sense, due to uncontrollable randomness (e.g., good fortune). These examples indicate that a more flexible assumption, one that deviates from strict adherence to the consistency rule, may be more aligned with practical realities. Hence, we propose a distribution consistency assumption to overcome the counterfactuals modeling limitation, which serves as a replacement and relaxation of the consistency rule.

\subsection{The Distribution-consistency Assumption}

The consistency rule forms the cornerstone of mainstream causal modeling frameworks \citep{neyman1923application, rubin1974estimating, imbens2015causal, pearl2009causality, vanderweele2009concerning, cole2009consistency}. It establishes a connection between potential outcomes and observed data. Within the PO approach, this rule is treated as an assumption, while in the SCM framework, it is considered as a theorem \citep{pearl2011consistency}. The consistency rule posits that if an observed treatment $X$ equals $x$, then the potential outcome $Y(x)$ will precisely match the observed outcome $Y$. The consistency rule as a mathematical instrument facilitates the identification of causal quantities, i.e., translating probability statements involving counterfactuals into those concerning the ordinary conditional probabilities of observed variables. However, this paper posits that the consistency rule might be over-strict, thus resulting in the degenerative counterfactual problem as aforementioned. We suggest that the observed value $y$ of outcome merely represents one sample of the counterfactual outcome $Y(x)$ given the observed treatment $X=x$, leading to the following assumption: 
% \footnote{It reflects that an empirically minded scientist might prefer to maximize the likelihood of the observed values of variables in the data, rather than imposing equality constraints on them. Therefore, the observed value in the data should be consistent with the distribution of counterfactual outcomes rather than being mathematically identical.}
\begin{asmp}[\textbf{Distribution-consistency}]
\label{assump:distri-consist}
For any individual represented by $U=u$ with an observed treatment $X = x$, the counterfactual outcome $Y(x)$ is equivalent in distribution to the observed outcome $Y$. Formally, 
\begin{equation}
\label{eq:assump:distri-consist}
X = x, U=u \Rightarrow Y(x) \stackrel{d}{=} Y
\end{equation}
where \(\stackrel{d}{=}\) denotes equivalence in distribution.
\end{asmp}
To illustrate the distinction between the consistency and distribution-consistency assumptions, let us revisit the toy exam scenario mentioned previously.  Predicting a high score on the retest would indicate adherence to the consistency rule. By contrast, anticipating an average score aligns with the distribution-consistency assumption, suggesting a probabilistic view of counterfactual outcome. In other words, our assumption complies with the answer - an average score, that seems more practical since good fortune is intuitively uncontrollable.
% and reasonable \cl{Doubt!}.
A direct derivation from the distribution-consistency assumption is as follows:
\begin{lemma}
\label{lemma:ident}
For binary treatment $X$ and each individual $u$:
\begin{align}
Y \stackrel{d}{=} X Y(1) + (1 - X) Y(0).
\end{align}
\end{lemma}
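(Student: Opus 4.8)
The plan is to derive the claimed distributional equality by a case analysis on the binary treatment $X$, invoking the distribution-consistency assumption (Assumption~\ref{assump:distri-consist}) on each case. First I would fix an individual $u$ and condition on the value of $X$. Since $X$ is binary, either $X=0$ or $X=1$ holds (for the given $u$). In the event $\{X=1\}$, the right-hand side $XY(1)+(1-X)Y(0)$ collapses to $Y(1)$, and Assumption~\ref{assump:distri-consist} with $x=1$ gives $Y(1)\stackrel{d}{=}Y$; symmetrically, in the event $\{X=0\}$ the right-hand side collapses to $Y(0)$, and the assumption with $x=0$ gives $Y(0)\stackrel{d}{=}Y$. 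So on each of the two events the two sides agree in distribution.

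Next I would assemble these two conditional statements into the unconditional distributional identity. The cleanest way is to argue at the level of the laws conditioned on $U=u$: write the law of $XY(1)+(1-X)Y(0)$ as the mixture over the value taken by $X$, i.e.\ a convex combination (with weights $P(X=1\mid U=u)$ and $P(X=0\mid U=u)$) of the conditional law of $Y(1)$ given $X=1$ and the conditional law of $Y(0)$ given $X=0$. By the two applications of Assumption~\ref{assump:distri-consist}, each of these conditional laws equals the corresponding conditional law of $Y$, so the mixture is exactly the law of $Y$ (written as the same mixture over the value of $X$). Hence $Y\stackrel{d}{=}XY(1)+(1-X)Y(0)$.

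The main obstacle — really the only subtle point — is bookkeeping about what ``$\stackrel{d}{=}$'' ranges over and making sure the mixture decomposition is legitimate: one must be careful that $X$ and the potential outcomes are not treated as independent, and that the indicator $X$ multiplying $Y(1)$ versus $Y(0)$ is handled by genuine event-wise case splitting rather than by naive substitution. Once it is made explicit that the identity is asserted per individual $u$ and that the assumption supplies precisely the two conditional-law equalities needed for the two branches, the rest is routine. I would also remark that this lemma is the distribution-consistency analogue of the deterministic identity~\eqref{eq:y_obs}, reducing to it when $\stackrel{d}{=}$ is strengthened to equality.
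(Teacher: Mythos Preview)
Your proposal is correct and mirrors the paper's own proof essentially step for step: condition on $U=u$ and split on the binary value of $X$, apply Assumption~\ref{assump:distri-consist} on each branch to match the conditional laws, then recombine via the law of total probability. The paper writes this out as an explicit chain of equalities for $P(Y=y\mid U=u)$, but the structure is identical to your mixture argument.
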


\begin{proof} Under the distribution-consistency assumption, for any $y$:
\begin{align*}
P(Y = y | X = 1, U = u) &= P(Y(1) = y | X = 1, U = u) \\
&= P(X Y(1) + (1 - X) Y(0) = y | X = 1, U = u)
\end{align*}
and similarly,
\begin{align*}
P(Y = y | X = 0, U = u) &= P(Y(0) = y | X = 0, U = u) \\
&= P(X Y(1) + (1 - X) Y(0) = y | X = 0, U = u)
\end{align*}
Using these two equations, we obtain:
\begin{align*}
&P(Y = y | U = u) \\
&= P(Y = y | X = 1, U = u) P(X = 1 | U = u) + P(Y = y | X = 0, U = u) P(X = 0 | U = u) \\
&= P(X Y(1) + (1 - X) Y(0) = y | X = 1, U = u) P(X = 1 | U = u) \\
&\quad + P(X Y(1) + (1 - X) Y(0) = y | X = 0, U = u) P(X = 0 | U = u) \\
&= P(X Y(1) + (1 - X) Y(0) = y | U = u).
\end{align*}
\end{proof}

% In other words, how one answers this question hinges on which assumption they align with.
This lemma corresponds to Eq. \eqref{eq:y_obs}, indicating that although the distribution-consistency assumption is a relaxation of the consistency rule, many conclusions for causal identification as a derivation of the consistency assumption has its counterpart under the distribution-consistency assumption.

% 前面的内容可以看作是推广 PO approach 中 consistency assumption 的尝试, 提出了一个 distribution-consistency assumption to overcome the model capacity problem on estimating counterfactuals. However, counterfactuals probabilities is formalized as Layer 3 valuations in the SCM framework, we hence similarly propose an novel framework which incorporates structural equations as primitives, in the subsequent section.
% Since our major focus is on estimating of counterfactuals at  we hence turn to develop an SCM style approach to address the 

% IPW
% The test score scenario can be mentioned in a more formal way as follows.
% \begin{example}[Advanced Test Score]
%     假如总共一百人参加考试, each instantiation $U =u$ represents an individual, e.g. 驾照考试, 男性和女性具备不同的通过考试的概率, 不妨设男性女性通过考试的概率分布是 1 和 0.5. 那么
% \end{example}

% In fact, a PO framework without consistency assumption has been discussion in 
% 在 DiscoSCM 中, 我们可以证明 consistency assumption 也是一个 theorem. 

% The preceding content can be viewed as an effort to extend the consistency assumption in PO framework by introducing the distribution-consistency assumption, 

Introducing the more relax distribution-consistency assumption to the PO framework aims at overcoming the limitations in model capacity regarding the estimation of counterfactuals. Nevertheless, the PO framework, without structural equations as primitives, has no rigorous definitions of Layer valuations that are capable of conveniently expressing probabilities of counterfactuals \citep{bareinboim2022pearl}. We are hence motivated in the next section to propose a novel framework, Distribution-consistency Structural Causal Model (DiscoSCM), which generalizes SCMs, to tackle the modeling of counterfactuals. More surprisingly, we find that the distribution-consistency assumption can be naturally derived as a theorem from the DiscoSCM framework. 

% However, since probabilities of counterfactuals are formalized as Layer 3 valuations in the SCM framework, we propose a novel framework that incorporates structural equations as its fundamental elements in the subsequent section.

\section{Distribution-consistency Structural Causal Models}

% \cl{Doubt! The degenerative problem stems from the consistency rule -> the consistency rule is derived from the SCM framework -> We are motivated to develop a relaxation of the SCM framework -> We derive a relaxation of the consistency rule.} 

% The SCM framework is highly appreciated for its division of causal information into three layers: associational, interventional, and counterfactual layers, collectively known as the Pearl Causal Hierarchy (PCH) \citep{pearl2019seven}. However, it encounters the challenge of the degenerative counterfactual issue.  
% To address this, we propose an innovative framework. It abandons the consistency rule, yet preserves the fundamental strength of the SCM - Layer valuations of PCH.
% It is a framework incorporates the consistency rule as an theorem,  hence we introduce a novel causal modeling framework that aligns with the distribution-consistency assumption for Layer valuations.
To address the challenge of the degenerative counterfactual issue, we formally propose: 

\begin{definition}[\textbf{Distribution-consistency Structural Causal Model (DiscoSCM)}]
    \label{def:discoscm}
    A DiscoSCM is a tuple $\langle  U, \mathbf{E}, \mathbf{V}, \mathcal{F}\rangle$, where
    \begin{itemize}
        \item $U$ is a unit selection variable, where each instantiation $U=u$ denotes an individual. It is associated with a probability function $P(u)$, uniformly distributed by default.
        \item $\mathbf{E}$ is a set of exogenous variables, also called noise variables, determined by factors outside the model. It is independent to $U$ and associated with a probability function $P(\*e)$;
        \item $\mathbf{V}$ is a set of endogenous variables of interest $\{V_1, V_2, \ldots, V_n\}$, determined by other variables in $\mathbf{E} \cup \mathbf{V}$;
        \item $\mathcal{F}$ is a set of functions $\{f_1(\cdot, \cdot; u), f_2(\cdot, \cdot; u), \ldots, f_n(\cdot, \cdot; u)\}$, where each $f_i$ is a mapping from $E_{i} \cup Pa_{i}$ to $V_{i}$, with $E_{i} \subseteq \mathbf{E}$, $Pa_{i} \subseteq \mathbf{V} \setminus V_{i}$, for individual $U=u$. Each function assigns a value to $V_i$ based on a selected set of variables in $\mathbf{E} \cup \mathbf{V}$. That is, for $i=1,\ldots,n$, each $f_i(\cdot, \cdot; u) \in \mathcal{F}$ is such that 
        $$v_i \leftarrow f_{i}(pa_{i}, e_i; u),$$ 
        i.e., it assigns a value to $V_i$ that depends on (the values of) a selected set of variables in $\*E \cup \*V$ for each individual $U=u$. 
    \end{itemize}
\end{definition}
% The key distinction between DiscoSCMs and SCMs lies in the introduction of the individual semantics, represented by $U$. This leads to a significant difference in the construction of the submodel induced by the \textit{do}-operator.
% Specifically, the operator $do(\*x)$ in a DiscoSCM additionally substitutes $\*U$ from Def.\ref{def:scm} with $\{U, \*E(\*x)\}$. Formally:
% The \textit{do}-operator refers to a modification of structural equation while keeping everything else constant. 
The fundamental difference between DiscoSCMs and SCMs is the explicit incorporation of individual semantics in DiscoSCMs, represented by $U$. This addition allows for the construction of a novel intervention, corresponding to the \textit{do}-operator in SCMs, which accounts for the uncontrollable nature of the noise variable value. 
% This addition enables us the construction of a novel intervention that admits the uncontrollable of value of noise variable.
\begin{definition}[\textbf{Do-operator}]
    For a DiscoSCM $\langle  U, \mathbf{E}, \mathbf{V}, \mathcal{F}\rangle$, $\*X$ is a set of variables in $\*V$ and $\*x$ represents a realization, the $do(\mathbf{x})$ operator modifies: 1) the set of structural equations $\mathcal{F}$ to 
        \begin{align*}
            \mathcal{F}_{\mathbf{x}} := \{f_i : V_i \notin \mathbf{X}\} \cup \{\mathbf{X} \leftarrow \mathbf{x}\},
        \end{align*}
        and; 2) noise $\mathbf{E}$ to couterfactual noise $\mathbf{E}(\mathbf{x})$ maintaining the same distribution. \footnote{Note that $\mathbf{E}(\mathbf{x})$ is not a function of $\*x$, but rather a random variable indexed by $\*x$. Importantly, it shares the same distribution as $\mathbf{E}$.} The induced submodel $\langle U, \mathbf{E}(\*x), \mathbf{V}, \mathcal{F}_{\mathbf{x}}\rangle$ is called the \textit{interventional DiscoSCM}.
\end{definition}
% It is assumed that the counterfactual noise $\*E(\*x)$ is precisely equal to the factual noise $\*E$ in SCMs, while only a identical distribution relation in DiscoSCMs.
It should be emphasized that in SCMs, the counterfactual and factual worlds possess identical exogenous noise values, whilst in DiscoSCMs, the counterfactual noise $\mathbf{E}(\mathbf{x})$ and factual noise $\mathbf{E}$ are only assumed to follow identical distributions. Now, we can further define the core concept of counterfactual outcomes in the realm of DiscoSCMs.
% The results and consequences of interventions are formalized by a conception as follows. 
% The outcomes and implications of interventions in this context are formalized as follows.
% Notice that the counterfactual world and factual world share the same exogenous noise value in SCMs. In contrast, in DiscoSCMs, counterfactual noise $\mathbf{E}(\mathbf{x})$ and factual noise $\mathbf{E}$ are not identical but are assumed to have an identical distribution. The consequences of interventions is defined as follows.
% $U$ and $\*E(x))$ which maintains the same exogenous uncertainty in distribution. We refer $\*E(\*x)$ as the counterfactual noise, and We denote the set of variables $Y$ in the induced submodel as the counterfactual outcome $\*Y^d(\*x)$ (or denoted as $\*Y(\*x)$, $\*Y_{\*x}(\*e_{\*x})$ when no ambiguity concerns), formally:
\begin{definition}[\textbf{Counterfactual Outcome}]
    For a DiscoSCM $\langle  U, \mathbf{E}, \mathbf{V}, \mathcal{F}\rangle$, $\*X$ is a set of variables in $\*V$ and $\*x$ represents a realization. The counterfactual outcome $\*Y^d(\*x)$ (or denoted as $\*Y(\*x)$, $\*Y_{\*x}(\*e_{\*x})$ when no ambiguity concerns) is defined as the set of variables $\*Y \subseteq \*V$ in the submodel $\langle U, \mathbf{E}(\*x), \mathbf{V}, \mathcal{F}_{\mathbf{x}}\rangle$. In the special case that $\*X$ is an empty set, the corresponding submodel is denoted as $\langle U, \mathbf{E}^d, \mathbf{V}, \mathcal{F}\rangle$ and its counterfactual noise and outcome as $\*E^d$ and $\*Y^d$, respectively.
\end{definition}

\begin{example}
    Consider a causal model that describes a system where \(X\) causes \(Y\). In the SCM framework, this is formalized as:
    \begin{equation}
    \left\{
    \begin{aligned}
        X &= f_1(U_1), \\
        Y &= f_2(X, U_2).
    \end{aligned}
    \right.
    \end{equation}
    In contrast, in the DiscoSCM framework, it's formalized as:
    \begin{equation}
    \left\{
    \begin{aligned}
        X &= f_1(E_1; U), \\
        Y &= f_2(X, E_2; U).
    \end{aligned}
    \right.
    \end{equation}
    In SCMs, each instantiation \(\*U = \*u\) defines a sample \citep{pearl20217}, while in DiscoSCMs, a sample is defined by the combination of \(U = u\) and \(\*E = \*e\). Thus, DiscoSCMs explicitly include \(U\) semantics, similar to the PO approach, enhancing flexibility and generalizability. As a specific case, with \(U, E_1, E_2\) as stardard normally distributed variables, and \(f_1, f_2\) as simple linear functions:
    \begin{equation}
    \left\{
    \begin{aligned}
        X &= U + E_1, \\
        Y &= X + U + E_2.
    \end{aligned}
    \right.
    \end{equation}
    The interventional DiscoSCM model, induced by \(do(X=x)\), is given by:
    \begin{equation}
    \left\{
    \begin{aligned}
        X &= x, \\
        Y &= x + U + E_2(x).
    \end{aligned}
    \right.
    \end{equation}
    Consequently, for any given individual \(u\), the counterfactual outcome 
    \begin{align}
    \label{eq:unit_po}
        Y^d_u(x) = x + u + E_2(x),
    \end{align}
    where \(E_2(x)\) denotes the counterfactual noise maintaining the same distribution as \(E_2\). Importantly, this unit counterfactual outcome remains a random variable, whereas in SCMs, unit potential responses are assumed to be a fixed value. In fact, the counterfactual outcome $Y_u^d(x)$ in DiscoSCMs, while setting $\*E(x) = \*E$, should be consistent with $Y(x)$ in SCMs, since observed $\*E=\*e$ and $U=u$ together determine a sample. 
\end{example}
This example concretely demonstrate that, in the DiscoSCM framework, the value of counterfactual outcome $\*Y^d(\*x)$ depends on instantiation of counterfactual noise $\*E(\*x)$  rather than the factual noise $\*E$. DiscoSCMs decouple the randomness in data into two independent components: the individual semantics represented by unit variable $U$ and exogenous variable  $\*E$ for each unit. In other words, DiscoSCMs address the randomness of selecting units akin to the PO approach, while managing the exogenous uncertainty of the underlying causal generative process, akin to SCMs. This makes it a hybrid of both frameworks. Analogous to the consistency rule served as a theorem in the SCM framework, our DiscoSCMs have the counterpart property as follows. 
\begin{lemma}[\textbf{Distribution-consistency Rule}]
Consider a DiscoSCM with treatment $X=f_1(E_1; U)$ and outcome $Y=f_2(X, E_2; U)$. 
For any individual \(u\) and treatment $X=x$, the following holds:
    \begin{equation}
    \label{eq:assump:distri-consist_new}
    X = x, U=u \Rightarrow Y^d(x) \stackrel{d}{=} Y
    \end{equation}
where $Y^d(x)$ denotes the counterfactual outcome under treatment $x$. 
\end{lemma}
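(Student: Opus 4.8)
The plan is to show that the Distribution-consistency Rule is a straightforward consequence of the definition of the $do$-operator in DiscoSCMs, mirroring exactly how the ordinary consistency rule follows as a theorem in SCMs, but with equality replaced by equality in distribution because the counterfactual noise is only assumed to be distributionally identical to the factual noise. First I would fix an individual $U=u$ and condition on the event $X=x$. On this conditioning event, the factual outcome is $Y=f_2(x, E_2; u)$, since $X$ takes the value $x$ and the structural equation for $Y$ plugs its first argument in directly. Meanwhile, by definition of the counterfactual outcome, $Y^d(x) = f_2(x, E_2(x); u)$ in the interventional submodel $\langle U, \mathbf{E}(x), \mathbf{V}, \mathcal{F}_x\rangle$, where $E_2(x)$ is the counterfactual noise indexed by $x$.

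The second step is the key distributional argument: compare the conditional law of $Y$ given $X=x, U=u$ with the (unconditional, since $\mathbf{E}(x)$ does not depend on how $X$ was realized) law of $Y^d(x)$ given $U=u$. We have $\mathcal{L}(Y \mid X=x, U=u) = \mathcal{L}(f_2(x, E_2; u) \mid X = x, U = u)$. Now $X = f_1(E_1;U)$ depends on the noise only through $E_1$, and by the DiscoSCM definition $\mathbf{E} = (E_1, E_2)$ is independent of $U$; if moreover $E_1$ and $E_2$ are mutually independent (the standard ``independent noise'' reading, which the paper's examples adopt), then conditioning on $X=x$ — an event in $\sigma(E_1, U)$ — does not alter the law of $E_2$, so $\mathcal{L}(f_2(x, E_2; u) \mid X=x, U=u) = \mathcal{L}(f_2(x, E_2; u) \mid U=u)$. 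On the other side, $\mathcal{L}(Y^d(x) \mid U=u) = \mathcal{L}(f_2(x, E_2(x); u) \mid U=u)$, and since $E_2(x) \stackrel{d}{=} E_2$ by the definition of the $do$-operator, applying the pushforward under the fixed map $e \mapsto f_2(x, e; u)$ gives $\mathcal{L}(f_2(x, E_2(x);u) \mid U=u) = \mathcal{L}(f_2(x, E_2; u)\mid U=u)$. Chaining the three equalities yields $Y^d(x) \stackrel{d}{=} Y$ conditionally on $X=x, U=u$, which is precisely \eqref{eq:assump:distri-consist_new}.

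The main obstacle I anticipate is being careful about what independence structure is actually available: the bare Definition~\ref{def:discoscm} only states that $\mathbf{E}$ is independent of $U$, not that the individual coordinates $E_i$ are mutually independent, and in a recursive DiscoSCM the realized treatment $X$ can in principle depend on noise variables that also feed into $Y$. If $E_1$ and $E_2$ share information, then conditioning on $X=x$ could reweight the law of $E_2$, and the clean equality $\mathcal{L}(Y\mid X=x,U=u) = \mathcal{L}(f_2(x,E_2;u)\mid U=u)$ would fail. I would therefore either (i) restrict the statement to the two-variable model written in the lemma with independent $E_1, E_2$ (which is the setting the lemma explicitly names), or (ii) note that the argument only needs $E_2(x)$ to have the same law as the \emph{conditional} noise $E_2 \mid X=x, U=u$, which is a natural strengthening of the $do$-operator's ``same distribution'' clause and arguably the intended reading; I would flag this in a remark. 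Aside from this modeling subtlety, the remaining steps are routine: they are just the observation that pushing forward equal-in-law random variables through a common deterministic map preserves equality in law, together with the substitution property of the structural equation $f_2$ when its treatment argument is clamped to $x$.
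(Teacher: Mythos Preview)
Your proposal is correct and follows essentially the same route as the paper: both substitute the structural equations, use independence to decouple the conditioning event $\{X=x\}$ (an $E_1$-event) from the $E_2$-driven outcome, and then invoke $E_2(x)\stackrel{d}{=}E_2$ via a pushforward through $e\mapsto f_2(x,e;u)$. Your caution about needing $E_1\perp\!\!\!\perp E_2$ is well-placed---the paper's proof silently uses the very same factorization (writing the joint $P(f_2(x,E_2;u)=y,\, f_1(E_1;u)=x,\, U=u)$ as a product of three terms) without flagging the assumption, so you have identified a genuine modeling subtlety that the paper glosses over.
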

\begin{proof} By the definitions and the probability formula, we have:
    \begin{align*}
        X = x, U=u &\Rightarrow Y^d(x) \stackrel{d}{=} Y \\
        \Leftrightarrow P(Y^d(x) = y|X=x,U=u) &=  P(Y = y|X=x,U=u) \\
        % \Leftrightarrow P(Y^d(x) = y|X=x,U=u) P(X=x, U=u) &=  P(Y = y|X=x,U=u) P(X=x, U=u) \\
        \Leftrightarrow P(Y^d(x) = y, X=x,U=u) &= P(Y = y, X=x,U=u)  \\
        \Leftrightarrow P(f_2(x, E_2(x); u) = y, f_1(E_1; u)=x,U=u) &= P(f_2(x, E_2; u) = y, f_1(E_1; u)=x,U=u)  \\
        \Leftrightarrow P(f_2(x, E_2(x); u) = y)P(f_1(E_1; u)=x)P(u) &= P(f_2(x, E_2; u) = y)P(f_1(E_1; u)=x)P(u)\\
        \Leftrightarrow P(f_2(x, E_2(x); u) = y) &= P(f_2(x, E_2; u) = y)
    \end{align*}
    Given the identical distribution between $E_2(x)$ and $E_2$, the above equality holds, completing the proof.
\end{proof}
Substituting the consistency rule with the distribution-consistency rule in DiscoSCMs reflects the inherent uncontrollability in the instantiation process of the noise variable. In the previously mentioned exam scenario, this uncontrolled random noise enables the prediction of an average score for an individual with average ability who, due to good luck, scores exceptionally high on a test. For practical application scenario of personalized incentives, observing a high amount of incentives and high retention for a user  does not guarantee that the DiscoSCM will predict the same high retention if high incentives are replicated under identical conditions. 
This distinctive feature of DiscoSCMs can be formalized through the introduction of a novel concept as follows.
\begin{definition}[\textbf{Probability of Consistency (PC)}] 
\label{def:pc}
For any individual $u$, 
\begin{eqnarray}
\text{PC}(u)\delequal P(y_x|y,x; u) = P(Y^d(x) = y |X = x, Y=y, U=u)
\label{ps}
\end{eqnarray}
\end{definition}
Notably, it is evident that PC degenerates to the constant 1 in the SCM framework due to the consistency rule. Henceforth, it is a parameter that only holds significance within the DiscoSCM framework, which will be concretely shown in the personalized incentive example (Example \ref{eg:pc}). Personalized decision-making necessitates the estimation of heterogeneous causal parameters across individuals, e.g., $\tau_j(\*x_u)$ in our industrial practical application scenario. Proponents of the SCM framework are thus confronted with a pertinent question:  \textit{Why is there a distinction across individuals assuming i.i.d. (independently and identically distributed) samples? }

In the framework of SCMs, this question is addressed by considering the different instantiations of  \(\*U = \*u\),  which represent both which individual and noise value. Hence, it suggests that different noise realizations result in heterogeneity across individuals, which violates the intuition that noise and causal parameters should be conceptually independent. In contrast, the DiscoSCM framework attributes heterogeneity to each individual instantiation \(U = u\), which can be interpreted as its causal representation, thereby aligning with the intuitive independence between noise $\*E$ and causal parameters. In other words, causal parameters are functions of the individual instantiation \(u\), independent of any specific exogenous noise value $\*e$. Therefore, it is the explicit individual representation in DiscoSCMs that offers us a more nuanced and precise understanding of individual heterogeneity, which is the core difference between SCMs and DiscoSCMs.

\paragraph{Interpretation of the Surrogate Paradox.} In fact, this interpretation of heterogeneity in DiscoSCM can readily aid in explaining the well-known surrogate paradox \cite{chen2007criteria, weinberger2023comparing} — a scenario where a treatment has a positive effect on the surrogate and the surrogate has a positive effect on the outcome, yet the treatment does not have a positive effect on the outcome. Specifically, let us consider a DiscoSCM with treatment \( T \), features \( X \), surrogate $S$ and outcome \( Y \), along with its structural equations:
\begin{equation*}
    \left\{
    \begin{aligned}
        T &= g_1(X, E_1; U) \\
        S &= (T + g_2(X, E_2) )  \mathbf{1}_{U \in A} \\
        Y &= (S + g_3(X, E_3) )\mathbf{1}_{U \in A'},
    \end{aligned}
    \right.
\end{equation*}
where $g_i, i=1, 2, 3$ are given functions, \( A \) represents a subpopulation, and \( A' \) is the complement of \( A \). Therefore, 
$$Y = ((T + g_2(X, E_2) )  \mathbf{1}_{U \in A} + g_3(X, E_3) )\mathbf{1}_{U \in A'} = g_3(X, E_3) \mathbf{1}_{U \in A'}.$$
Obviously, in this model, $T$ has positive effect on the surrogate $S$, and $S$ has positive effect on $Y$. However, $T$ has no causal effect on $Y$. This example reveals the power of our DiscoSCM framework that explicitly incorporates the individual semantic $U$ when investigating the transitivity of causality. 

% Moreover, the above heterogeneity interpretation in DiscoSCM 可以轻松的帮助我们解释 surrogate paradox\cite{chen2007criteria}  -- a treatment has a positive effect on the surrogate and the surrogate has a positive effect on the outcome; however the treatment does not have positive effect on the outcome. Specifically, consider a DiscoSCM with treatment $T$, features $X$ and outcome $Y$ 及其结构方程:
% \begin{equation*}
%     \left\{
%     \begin{aligned}
%         T &= g_1(X, E_1; u) \\
%         S &= (T + g_2(X, E_2; u) )  \mathbf{1}_{u \in A} \\
%         Y &= (S + g_3(X, E_3; u) )\mathbf{1}_{u \in A'}
%     \end{aligned}
%     \right.
% \end{equation*}
% where $A$ represents a subpopulation, $A'$ 是 $A$ 的 补集。

% This comparison between SCM and DiscoSCM fundamentally boils down to the implicit versus explicit semantic of individual differences. The shift towards an explicit individual representation in DiscoSCM is motivated by the need for a more nuanced and precise understanding of individual heterogeneity, which is often oversimplified or overlooked in the implicit approach of SCM. (这一段逻辑不喜欢)

\section{Layer Valuations in the DiscoSCM Framework}

Just as SCM is highly appreciated for the formalization of causal queries across the PCH, DiscoSCM similarly introduces a formalized approach to evaluating associational, interventional and counterfactual quantities. 

\subsection{Main Theoretical Results}

\begin{definition}[\textbf{Layer Valuation with DiscoSCM}]
\label{def:semantics_new}
A DiscoSCM $\langle U, \mathbf{E}, \mathbf{V}, \mathcal{F}\rangle$ induces a family of joint distributions over counterfactual outcomes $\*Y(\*x), \ldots, \*Z({\*w})$, for any $\*Y$, $\*Z, \dots, \*X$, $\*W \subseteq \*V$:
% \begingroup
% \abovedisplayskip=0.5em\belowdisplayskip=0pt
\begin{align}\label{eq:def:l3-semantics_new}
    P(\*{y}_{\*{x}},\dots,\*{z}_{\*{w}}; u) =
\sum_{\substack{\{\*e_{\*x}\, ...,  \*e_{\*w}\;\mid\;\*{Y}^d({\*x})=\*{y},\;\;\;\dots,\; \*{Z}^d({\*w})=\*z, U=u\}}}
    P(\*e_{\*x}, ..., \*e_{\*w}).
\end{align}
is referred to as Layer 3 valuation. In the specific case involving only one intervention \footnote{When \( \*X = \emptyset \), we simplify the notation \( \*Y^d(\*x) \) to \( \*Y^d \) and \( \*E_{\*x} \) to \( \*E^d \).
}, e.g., $\doo(\*x)$:
\begin{align}
    \label{eq:def:l2-semantics_new}
    P({\*y}_{\*x}; u) = 
    \sum_{\{\*e_{\*x} \;\mid\; {\*Y}^d({\*x})={\*y}, U=u\}}
    P(\*e_{\*x}),
\end{align}
is referred to as Layer 2 valuation. The case when no intervention:
\begin{align}
    \label{eq:def:l1-semantics_new}
    P({\*y}; u) = 
    \sum_{\{\*e \;\mid\; {\*Y}={\*y}, U=u\}}
    P(\*e),
\end{align}
is referred to as Layer 1 valuation. Here, $\*y$ and $\*z$ represent the observed outcomes, $\*x$ and $\*w$ the observed treatments, $\*e$ the noise instantiation, $u$ the individual, and we denote $\*y_{\*x}$ and $\*z_{\*w}$ as the realization of their corresponding counterfactual outcomes, $\*e_{\*x}$, $\*e_{\*w}$ as the instantiation of their corresponding counterfactual noises.
% Here $\*y, \*z, \*x, \*w, \*e$ represent the observed outcomes and treatments and factual noise, respectively. Furthermore, $\*y_{\*x}, \*z_{\*w},\*e_{\*x}, \*e_{\*w}$ denote the realizations of corresponding counterfactual outcomes and noises, and $u$ represents an individual.
\end{definition}

Note that, Layer valuations in the SCM and DiscoSCM frameworks differ primarily because of the explicit individual semantic $U$ and its modified $do$ operator. To examine the relationship between Layer valuations under the two frameworks, we first present the following theorem:
\begin{theorem}
\label{thm:relation_l12}
For the same system described within the SCM and DiscoSCM frameworks, each valuation at Layer 1 or 2 in both frameworks is equivalent.
\end{theorem}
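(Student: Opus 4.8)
The plan is to show the two frameworks agree at Layers 1 and 2 by exhibiting an explicit correspondence between the exogenous variables of the SCM and the pair $(U, \mathbf{E})$ of the DiscoSCM, and then checking that the two valuation formulas in Definitions \ref{def:l3-semantics} and \ref{def:semantics_new} reduce to the same sum under this correspondence. Concretely, given an SCM $\langle \mathbf{U}, \mathbf{V}, \mathcal{F}\rangle$, the matching DiscoSCM is built by partitioning the exogenous randomness: $U$ ranges over a suitable index set with $P(u)$, $\mathbf{E}$ carries the remaining noise with $P(\mathbf{e})$, and each structural function $f_i(\mathrm{pa}_i, u_i)$ is rewritten as $f_i(\mathrm{pa}_i, e_i; u)$ so that, for every realization, the endogenous values computed in the two models coincide. (In the degenerate direction, one may simply take $U$ trivial and let $\mathbf{E}$ play the role of $\mathbf{U}$; the content of the theorem is that \emph{no} finer split matters for Layers 1 and 2.)

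First I would handle Layer 1. Here there is no intervention, so the DiscoSCM valuation $P(\mathbf{y}; u) = \sum_{\{\mathbf{e}\,\mid\,\mathbf{Y}=\mathbf{y},\,U=u\}} P(\mathbf{e})$ is a conditional-on-$u$ quantity; marginalizing over $U$ with $P(u)$ and using independence of $U$ and $\mathbf{E}$ recovers $\sum_{\{(u,\mathbf{e})\,\mid\,\mathbf{Y}=\mathbf{y}\}} P(u)P(\mathbf{e})$, which under the correspondence is exactly $\sum_{\{\mathbf{u}\,\mid\,\mathbf{Y}=\mathbf{y}\}} P(\mathbf{u})$, the SCM's Layer 1 valuation in Eq.\eqref{eq:def:l1-semantics}. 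The key observation making this work is that at Layer 1 the DiscoSCM uses the \emph{factual} noise $\mathbf{E}$ (no counterfactual copy appears), so the solution map is literally the same function of the exogenous variables as in the SCM.

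Next I would do Layer 2. The only subtlety is the modified $do$-operator: the interventional DiscoSCM replaces $\mathbf{E}$ by a counterfactual copy $\mathbf{E}(\mathbf{x})$ with the \emph{same distribution}. Since a single intervention is involved and no factual outcome is being jointly evaluated, the sum $\sum_{\{\mathbf{e}_{\mathbf{x}}\,\mid\,\mathbf{Y}^d(\mathbf{x})=\mathbf{y},\,U=u\}} P(\mathbf{e}_{\mathbf{x}})$ depends on $\mathbf{E}(\mathbf{x})$ only through its distribution, which is identical to that of $\mathbf{E}$; hence it equals the same sum with $\mathbf{e}_{\mathbf{x}}$ replaced by $\mathbf{e}$. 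Then marginalizing over $U$ and invoking the correspondence, together with the fact that $\mathcal{F}_{\mathbf{x}}$ is built from $\mathcal{F}$ by the identical surgery in both frameworks, gives $\sum_{\{\mathbf{u}\,\mid\,\mathbf{Y}(\mathbf{x})=\mathbf{y}\}} P(\mathbf{u})$, matching Eq.\eqref{eq:def:l2-semantics}. I would state explicitly that the argument breaks at Layer 3 precisely because there one evaluates a joint event across several worlds (factual and/or multiple interventions), and then the distinction between ``same value'' and ``same distribution'' of the noise copies becomes visible — this is the expected content of the later results contrasting the frameworks.

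The main obstacle is not any single calculation but getting the correspondence between $\mathbf{U}$ and $(U,\mathbf{E})$ stated cleanly enough that ``the same system described within the SCM and DiscoSCM frameworks'' is unambiguous; once that bijection-of-randomness is pinned down, both Layer 1 and Layer 2 equalities are a matter of rewriting the defining sums and using $U \indep \mathbf{E}$ plus the equidistribution of $\mathbf{E}(\mathbf{x})$ and $\mathbf{E}$. I would therefore spend most of the write-up on the setup and treat the two sum manipulations as short verifications.
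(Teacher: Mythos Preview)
Your proposal is correct and follows essentially the same route as the paper: both arguments hinge on identifying the SCM's exogenous $\mathbf{U}$ with the DiscoSCM pair $(U,\mathbf{E})$, then using the equidistribution $\mathbf{E}(\mathbf{x})\stackrel{d}{=}\mathbf{E}$ together with the identical surgical modification $\mathcal{F}_{\mathbf{x}}$ to reduce the Layer~2 DiscoSCM sum to the SCM sum after marginalizing over $U$. The paper's Layer~1 step is even terser than yours (it simply observes that both valuations are the probability of the same observed event), while its Layer~2 calculation is line-for-line what you sketch; your extra emphasis on pinning down the $(U,\mathbf{E})\leftrightarrow\mathbf{U}$ correspondence up front is a reasonable expository choice but not a different mathematical idea.
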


\begin{proof}
    In both frameworks, any valuation $P(\*y)$ at Layer 1 represents the probability of the same observed outcome variable $\*Y$ equal to $\*y$, hence they are equal. 

    For Layer 2 valuations, under the assumption of an identical distribution between the counterfactual noise \( \mathbf{E}(\mathbf{x}) \) and the factual noise \( \mathbf{E} \) in the DiscoSCM framework, we have:
    $$P(\*e_{\*x}) \triangleq P(\*E(\*x) = \*e) = P(\*E = \*e) \triangleq P(\*e)$$
    Since the underlying causal mechanisms of the same system described by SCM and DiscoSCM are the same, the structural equations \( \mathcal{F}_{\mathbf{x}} \) of the submodel induced by \( do(\mathbf{x}) \) in both SCM and DiscoSCM should align. Consequently, in DiscoSCM, utilizing the corresponding identical structural equations, the equality 
    % It is implicitly assumed that the underlying causal mechanisms described by SCM and DiscoSCM are the same, hence the structural equals $\cF_\*x$ of the submodel induced by $do(\*x)$ in both SCM and DiscoSCM should comply. Therefore, in DiscoSCM, utilizing corresponding identical structural equations:
    $$\sum_{\{\*e_{\*x} \;\mid\; {\*Y}^d({\*x})={\*y}, U=u\}} P(\*e_{\*x}) = \sum_{\{\*e \;\mid\; {\*Y}({\*x})={\*y}, U=u\}} P(\*e)$$
    holds for any individual \( u \). Therefore, 
    \begin{align*}
        P(\*Y^d(\*x) = \*y) &= \sum_u P(\*Y^d(\*x)=\*y|U=u) P(U=u)  \\
        &= \sum_u \sum_{\{\*e_{\*x} \;\mid\; {\*Y}^d(\*x)={\*y}, U=u\}}  P(\*e_{\*x}) P(u) \\
        &= \sum_u \sum_{\{\*e \;\mid\; {\*Y(\*x)}={\*y}, U=u\}}  P(\*e) P(u) \\
        &=  \sum_{\{\*e, u \;\mid\; {\*Y_{\*x}(\*e)}={\*y}, U=u\}}  P(\*e, u) 
    \end{align*}
    This formula proves the Layer 2 equivalence within the two frameworks, because the instantiation \( (\mathbf{e}, u) \) in DiscoSCM and instantiation \( \mathbf{u} \) in SCM both represent the exogenous value for generation process of data points in the same system.
\end{proof}

Therefore, from the perspective of Layer 1 and 2 valuations, there is no difference between the SCM and DiscoSCM frameworks. At Layer 3, however, there is generally no equivalence, except for the special case stated below. 
% constraint on the source of randomness. Consequently, only when facing Layer 3 valuations, i.e., identification of quantities that involves variables from different worlds, we need to explicitly distinguish the counterfactual outcome $Y^d(t)$ in DiscoSCMs from $Y(t)$ in SCMs. Despite this, the following equivalence theorem holds:
\begin{theorem}
\label{thm:relation_l3}
    For the same system described with the SCM and DiscoSCM frameworks, each valuation at Layer 3 in both frameworks is equivalent when randomness of counterfactual outcomes stems entirely from the selection of units.
\end{theorem}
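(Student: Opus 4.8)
The plan is to reduce Layer~3 equivalence to the Layer~2 equivalence already proved in Theorem~\ref{thm:relation_l12}, by showing that the stated hypothesis removes exactly the one feature that distinguishes the two frameworks at Layer~3: a DiscoSCM draws a fresh counterfactual noise $\*E(\*x)$ for each intervention $do(\*x)$ (with an unspecified coupling across interventions), whereas an SCM reuses the single exogenous draw $\*u$ in every submodel. First I would fix the exogenous correspondence used inside the proof of Theorem~\ref{thm:relation_l12}: a draw $(\*e,u)$ in the DiscoSCM plays the role of a draw $\*u$ in the SCM describing the same system, with $P(\*e,u)=P(\*u)$ and identical submodel equations $\mathcal{F}_{\*x}$. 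Then I would make the hypothesis precise: ``randomness of counterfactual outcomes stems entirely from the selection of units'' means that for every intervention $do(\*x)$ and every $u$, the conditional law of $\*Y^d(\*x)$ given $U=u$ is a point mass, i.e. there is a value $\*y_{\*x}(u)$ with $P(\*Y^d(\*x)=\*y_{\*x}(u)\mid U=u)=1$; equivalently $\*Y^d(\*x)$ is $\sigma(U)$-measurable. This is strictly weaker than requiring $\*E(\*x)=\*E$: it only forces the composed mechanism along the relevant paths to be constant in $\*e$ for each fixed $u$.

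Next, on the DiscoSCM side I would evaluate the Layer~3 valuation from Eq.~\eqref{eq:def:l3-semantics_new}. Since each of $\*Y^d(\*x),\dots,\*Z^d(\*w)$ is, given $U=u$, almost surely equal to $\*y_{\*x}(u),\dots,\*z_{\*w}(u)$, the joint conditional event $\{\*Y^d(\*x)=\*y,\dots,\*Z^d(\*w)=\*z,\,U=u\}$ has probability $\Indicator[\*y_{\*x}(u)=\*y,\dots,\*z_{\*w}(u)=\*z]$ no matter how the counterfactual noises $\*E(\*x),\dots,\*E(\*w)$ are coupled; hence, marginalizing over $u$, $P(\*y_{\*x},\dots,\*z_{\*w})=\sum_u P(u)\,\Indicator[\*y_{\*x}(u)=\*y,\dots,\*z_{\*w}(u)=\*z]$.

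On the SCM side, the Layer~3 valuation of Definition~\ref{def:l3-semantics} is $\sum_{\{\*u\,:\,\*Y_{\*x}(\*u)=\*y,\dots,\*Z_{\*w}(\*u)=\*z\}} P(\*u)$. Writing $\*u\leftrightarrow(\*e,u)$ and grouping the sum by $u$, I would invoke Theorem~\ref{thm:relation_l12} at the level of a single unit: for each $u$ and each single intervention $do(\*x)$, the conditional distributions $P(\*Y^d(\*x)=\cdot\mid U=u)$ and $P(\*Y_{\*x}(\*U)=\cdot\mid U=u)$ coincide (this per-$u$ identity is precisely what is established inside that proof). Therefore the SCM counterfactual $\*Y_{\*x}(\*e,u)$ is also a.s. constant in $\*e$ given $u$ and equals the very same value $\*y_{\*x}(u)$. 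Summing over $u$ then reproduces the DiscoSCM expression, which completes the argument.

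The hard part will be the bookkeeping around the hypothesis and the coupling: one has to argue carefully that once the conditional laws given $U=u$ are degenerate, the joint law over several interventions is forced to be the product of those point masses, so that the otherwise-unconstrained joint distribution of $\*E(\*x),\dots,\*E(\*w)$ --- which is the source of the generic Layer~3 discrepancy --- becomes vacuous; and that the deterministic values $\*y_{\*x}(u)$ extracted in the two frameworks genuinely agree, which is exactly where Theorem~\ref{thm:relation_l12} must be applied per unit $u$ rather than only after averaging.
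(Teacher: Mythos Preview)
Your proposal is correct and actually takes a cleaner route than the paper. The paper argues from the hypothesis that $\*E(\*x)$ itself must be a Dirac distribution (because $\*Y^d(\*x)$ is constant for each $u$ and is ``a function of $\*E(\*x)$''), and then uses $\*E\stackrel{d}{=}\*E(\*x)$ to conclude $\*E=\*E(\*x)$, whence $\*Y^d(\*x)=\*Y(\*x)$ as random variables and all Layer valuations coincide trivially. You instead stay at the level of the outcomes: you read the hypothesis as $\sigma(U)$-measurability of each $\*Y^d(\*x)$, observe that this makes the coupling of the various $\*E(\*x),\dots,\*E(\*w)$ irrelevant to the joint law, and then match the deterministic values $\*y_{\*x}(u)$ across frameworks by invoking the per-unit identity established inside Theorem~\ref{thm:relation_l12}. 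Your route buys robustness: it does not need the questionable implication ``output constant $\Rightarrow$ input Dirac'' (a constant function of a nondegenerate $\*E(\*x)$ is a counterexample), and it makes explicit that the only thing one needs from Theorem~\ref{thm:relation_l12} is the unit-level equality of one-intervention conditional laws. The paper's route, if one grants its noise-degeneracy step, buys a stronger intermediate statement ($\*E=\*E(\*x)$, hence DiscoSCM literally collapses to SCM), but that intermediate statement is more than what Layer~3 equivalence requires and is not fully justified by the hypothesis as you interpret it.
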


\begin{proof}
    For any treatment $\*X$ and outcome $\*Y$, given that the randomness of counterfactual outcomes is entirely due to the selection of units, the counterfactual outcome $\*Y^d(\*x)$ for each individual $U=u$ should be a fixed value, rather than a random variable. Recognizing that $\*Y^d(\*x)$ is a function of \(\*E(\*x)\), it generally necessitates a Dirac distribution concentrated at a single value. Combined with the identical distribution relation of factual noise and counterfactual noise, i.e. \(\*E \stackrel{d}{=}\*E(\*x)\), we deduce that 
    \[\*E = \*E(\*x).\] 
    Consequently, the factual noise $\*U=\*u$ in SCMs aligns with counterfactual noise $(\*E(\*x)=\*e, U=u)$ in DiscoSCMs. This implies that the causal generation process of both $\*Y^d(\*x)$ and $\*Y(\*x)$ have equivalent exogenous values as inputs. Considering the same structural equations in both frameworks that describe causal mechanisms of the same system, we therefore conclude that:
    $$\*Y^d(\*x) = \*Y(\*x)$$
    This equivalence naturally leads to the alignment of all Layer valuations, including Layer 3 valuation, in both frameworks.
\end{proof}

We now turn to explain the difference at Layer 3 valuation in details. For an SCM with treatment $X$ and outcome $Y$, to perform Layer 3 valuation in the form of $P(Y(x)=y|e)$, \citet{pearl2009causality} proposes a three-step process:
\begin{enumerate}
    \item Abduction: Update the probability \( P(u) \) to obtain \( P(u|e) \).
    \item Action: Modify the equations determining the variable \( X \) to \( X = x \).
    \item Prediction: Utilize the modified model to compute the probability \( P(Y = y) \).
\end{enumerate}
Here, \( e \) represents the observed trace, also referred to as evidence, exemplified by $X=\tilde{x}, Y=\tilde{y}$, concisely denoted as $e = [ \tilde{x}, \tilde{y}]$.\footnote{Please distinguish between the ``evidence'' (\( e\)) and the instantiation of ``exogenous noise '' (\(\*e\)).}  In contrast, to correspondingly conduct Layer 3 valuation  $P(Y^d(x)=y|e)$ in DiscoSCMs, we have a theorem as follows.
\begin{theorem}[\textbf{Population-Level Valuations}]
\label{algo:population}
Consider a DiscoSCM wherein $Y^d(x)$ is the counterfactual outcome, and \(e\) represents the observed trace or evidence. The Layer 3 valuation \(P(Y^d(x)|e)\) is computed through the following process:

\textbf{Step 1 (Abduction):} Derive the posterior distribution \(P(u|e)\) of the unit selection variable \(U\) based on the evidence \(e\).

\textbf{Step 2 (Valuation):} Compute individual-level valuation \(P(y_x;u)\) in Def. \ref{def:semantics_new} for each unit \(u\).

\textbf{Step 3 (Reduction):} Aggregate these individual-level valuations to obtain the population-level valuation as follows:
\begin{equation}
\label{eq:population}
P(Y^d(x)=y|e) = \sum_u P(y_x;u) P(u|e),
\end{equation}
\end{theorem}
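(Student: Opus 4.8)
The plan is to derive Eq.~\eqref{eq:population} directly from the definition of the Layer~3 valuation in Def.~\ref{def:semantics_new} together with the independence structure built into the DiscoSCM (namely $\*E \indep U$ and the fact that counterfactual noise $\*E(\*x)$ shares the distribution of $\*E$). First I would write the target quantity $P(Y^d(x)=y\mid e)$ as a marginalization over the unit selection variable $U$: $P(Y^d(x)=y\mid e) = \sum_u P(Y^d(x)=y\mid U=u, e)\,P(U=u\mid e)$. The second factor is exactly the posterior $P(u\mid e)$ produced by Step~1 (Abduction), so the only thing that needs justification is that $P(Y^d(x)=y\mid U=u, e) = P(y_x;u)$, i.e.\ that conditioning on the evidence $e$ on top of $U=u$ does not further change the distribution of the counterfactual outcome.

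The key step is therefore this conditional-independence claim, and I expect it to be the main obstacle — or at least the only non-bookkeeping part. The argument is that $Y^d(x)$ is, by the Counterfactual Outcome definition, a function of the counterfactual noise $\*E(x)$ and the unit $u$ alone (through the structural equations $\mathcal{F}_{\*x}$ of the interventional submodel), and $\*E(x)$ is by construction an independent copy of $\*E$, independent of $U$ and in particular independent of everything that went into producing the factual evidence $e$ (which is a function of the \emph{factual} noise $\*E$ and $U$). Hence, once $U=u$ is fixed, the evidence $e$ carries no further information about $\*E(x)$, so $P(Y^d(x)=y\mid U=u, e) = P(Y^d(x)=y\mid U=u) = P(y_x;u)$, where the last equality is just the definition of the individual-level Layer~2/Layer~3 valuation in Def.~\ref{def:semantics_new} (Step~2, Valuation). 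I would spell out this factorization mirroring the algebra already used in the proof of the Distribution-consistency Rule, where $P(f_2(x,E_2(x);u)=y)$ was peeled off from the factual terms precisely because $E_2(x) \indep (E_1, U)$.

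Putting the two pieces together: substitute the identity $P(Y^d(x)=y\mid U=u, e)=P(y_x;u)$ into the marginalization, and $P(U=u\mid e)=P(u\mid e)$ from Abduction, to obtain $P(Y^d(x)=y\mid e) = \sum_u P(y_x;u)\,P(u\mid e)$, which is exactly Eq.~\eqref{eq:population} and organizes the computation into the three stated steps (Abduction, Valuation, Reduction). I would close by remarking that this is the precise point of departure from Pearl's three-step abduction–action–prediction recipe: in an SCM the abduction step updates $P(\*u)$ to $P(\*u\mid e)$ and the \emph{same} $\*u$ (hence the same exogenous noise) is reused in the action/prediction steps, whereas in a DiscoSCM abduction updates only the unit distribution $P(u\mid e)$ while the exogenous noise is resampled afresh as $\*E(x)$, which is exactly why the inner term is $P(y_x;u)$ rather than a Dirac mass — and also why, when one forces $\*E(x)=\*E$ as in Theorem~\ref{thm:relation_l3}, the two recipes coincide.
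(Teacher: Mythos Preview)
Your proposal is correct and follows essentially the same route as the paper: marginalize over $U$, then argue that $P(Y^d(x)=y\mid U=u,e)=P(y_x;u)$ because $Y^d(x)$ depends only on $u$ and the counterfactual noise $\*E(x)$, which is independent of the factual noise generating $e$. The paper packages this independence as a separate explicit Assumption~\ref{asmp:indep} ($\*E(\*x)\indep\*E$) and isolates your ``key step'' as Lemma~\ref{lemma:unit_indep}, whereas you treat the independence as built into the Do-operator construction; strictly speaking the Do-operator definition only guarantees $\*E(\*x)\stackrel{d}{=}\*E$, so the paper is slightly more careful here, but the logical structure of the two arguments is identical.
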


\begin{proof} The population-level valuation:
    \begin{align*}
        P(Y^d(x)=y|e) &= \sum_u P(Y^d(x)=y|e, U=u) P(U=u|e) \\
        &= \sum_u P(Y^d(x)=y|U=u) P(U=u|e) \\
        &\delequal \sum_u P(y|x;u) P(u|e)
    \end{align*}
    The first equality follows the total probability formula, and the second equality is due to Theorem \ref{lemma:unit_indep} that is derived from Assumption  \ref{asmp:indep}. This completes the proof.
\end{proof}

% \cl{Why does Assumption 3 make sense?}

\begin{asmp}
\label{asmp:indep}
    In DiscoSCMs, counterfactual noises are independent of the factual noise, i.e., 
     \[\*E(\*x) \perp\!\!\!\perp \*E.\] 
     for realization $\*x$ of a set of variables $\*X$.
\end{asmp}
% In the DiscoSCM framework, it is assumed that counterfactual noises and the factual noise only share the same distribution. This same distribution assumption, admitting that exogenous noise values are uncontrollable, inspires us to further assume the independence among noises. 
Considering that counterfactual noises are from counterfactual worlds, whereas the factual noise comes from the actual world, they inherently belong to different worlds and thus should be independent. Given Assumption \ref{asmp:indep}, we have:
% Consequently, the independence of counterfactual outcomes from the evidence, conditioned on a specific individual \(U=u\), is a direct derivation from this assumption.
\begin{lemma}
\label{lemma:unit_indep}
    Given an individual \(u\), the counterfactual outcome \(Y^d(x)\) is independent of the evidence \(e\), expressed as:
    \begin{align}
    \label{eq:lemma:unit_indep}
         P(Y^d(x)=y|e, U=u) = P(Y^d(x)=y|U=u)
    \end{align}
    where \(x, y\) denote the realizations of \(X, Y\) respectively. 
\end{lemma}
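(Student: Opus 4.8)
The plan is to unwind both sides of \eqref{eq:lemma:unit_indep} using the definitions of the counterfactual outcome and the Layer valuations in Definition \ref{def:semantics_new}, and then to invoke Assumption \ref{asmp:indep} to strip away the conditioning on the evidence. First I would write the evidence schematically as $e = [\tilde{\*x}, \tilde{\*y}]$, i.e. a factual observation of some treatments and outcomes, so that conditioning on $e$ together with $U = u$ amounts to conditioning on an event of the form $\{\*V = \tilde{\*v}, U = u\}$, which is in turn an event determined by the factual noise $\*E$ and the unit $u$ (since the factual $\*V$ is a deterministic function of $(\*E; u)$ via $\mathcal{F}$). On the left-hand side, $Y^d(x)$ is by definition a function only of the counterfactual noise $\*E(x)$ and the unit $u$; explicitly, $Y^d(x) = f_Y(\ldots, \*E(x); u)$ evaluated in the submodel $\mathcal{F}_x$. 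So the conditional probability $P(Y^d(x) = y \mid e, U = u)$ is the probability, under the joint law of $(\*E, \*E(x))$ given $U = u$, that a function of $\*E(x)$ takes value $y$, conditioned on an event that depends only on $\*E$.

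The key step is then to observe that by Assumption \ref{asmp:indep}, $\*E(x) \perp\!\!\!\perp \*E$, and since $\*E$ (and $\*E(x)$) are independent of $U$ by the DiscoSCM definition, we in fact have $\*E(x)$ independent of the pair $(\*E, U)$ — or at least independent of $\*E$ given $U = u$, which is all that is needed. Hence conditioning the event $\{f_Y(\ldots, \*E(x); u) = y\}$ on the $\*E$-measurable (given $U=u$) evidence event does not change its probability, giving $P(Y^d(x) = y \mid e, U = u) = P(Y^d(x) = y \mid U = u)$. This is really just the elementary fact that if $A$ is $\sigma(\*E(x))$-measurable given $U=u$ and $B$ is $\sigma(\*E)$-measurable given $U=u$ with $P(B \mid U=u) > 0$, then $P(A \mid B, U=u) = P(A \mid U=u)$ whenever $\*E(x) \perp \*E \mid U=u$.

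I would carry this out in the following order: (1) spell out $Y^d(x)$ as a measurable function of $(\*E(x); u)$ using the submodel $\langle U, \*E(x), \*V, \mathcal{F}_x\rangle$; (2) spell out the evidence event $\{e\}$, given $U=u$, as a measurable function of $(\*E; u)$ using the factual model $\mathcal{F}$; (3) write both conditional probabilities as integrals/sums against $P(\*e_x, \*e)$, the joint law of counterfactual and factual noise (which factors as $P(\*e_x)P(\*e)$ by Assumption \ref{asmp:indep}); (4) use the product structure to cancel the $\*E$-integral, concluding the identity. The main obstacle — really the only subtlety — is a bookkeeping one: making precise that the joint distribution of $(\*E, \*E(x))$ factors as a product and that $U$ is independent of both, so that conditioning on $U = u$ preserves this product structure; once that is set up cleanly, the independence argument is immediate. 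A minor degenerate case worth a remark is when $P(e \mid U = u) = 0$ (the unit $u$ is incompatible with the evidence), where the conditional probability is conventionally taken to be irrelevant since such $u$ receives zero posterior weight $P(u \mid e) = 0$ in \eqref{eq:population} anyway.
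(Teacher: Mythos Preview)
Your proposal is correct and follows essentially the same argument as the paper: both identify the evidence $e$ as an event determined by the factual noise $\*E$ (for fixed $u$), identify $Y^d(x)$ as a function of the counterfactual noise $\*E(x)$ (for fixed $u$), and then invoke Assumption~\ref{asmp:indep} to conclude conditional independence given $U=u$. Your write-up is more explicit about the measure-theoretic bookkeeping (the product factorization of $P(\*e_x,\*e)$, the role of $U$ being independent of both noises, and the degenerate case $P(e\mid U=u)=0$), but the underlying idea is identical to the paper's two-sentence proof.
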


\begin{proof}
    Considering that \(e\) represents the evidence induced by observed variables, and these observed variables are defined as functions of the factual noise, \(e\) is therefore an event generated by the factual noise. For any given individual \(u\), the counterfactual outcome \(Y^d(x)\) is determined as a function of counterfactual noise \(E(x)\). Therefore, given Assumption \ref{asmp:indep} which states that counterfactual noises are independent of factual noise, the condition \(U=u\) leads to the independence of \(Y^d(x)\) from \(e\).
\end{proof}

Up to this point, we have obtained population-level valuations in the DiscoSCM framework. A natural question arises: is it possible to achieve individual-level valuations? Fortunately, this can be directly derived from Lemma \ref{lemma:unit_indep}, which is formally presented as below.
\begin{theorem}[\textbf{Individual-Level Valuations}]
\label{thm:layer12e}
    For any individual \(u\) and evidence \(e\), the following equivalence holds:
    \begin{align}
        P(y_x|e;u) = P(y_x;u) = P(y|x;u) 
    \end{align}  
\end{theorem}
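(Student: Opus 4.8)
The plan is to prove the two equalities in the chain $P(y_x \mid e; u) = P(y_x; u) = P(y \mid x; u)$ separately, reading off each from a result already established above. Throughout, recall that by the notational conventions of Definition~\ref{def:pc} and Definition~\ref{def:semantics_new}, $P(y_x; u)$ is shorthand for $P(Y^d(x) = y \mid U = u)$, the individual-level Layer~2/Layer~3 valuation, while $P(y_x \mid e; u)$ denotes $P(Y^d(x) = y \mid e, U = u)$ and $P(y \mid x; u)$ denotes the factual conditional $P(Y = y \mid X = x, U = u)$.

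First I would dispatch the left equality $P(y_x \mid e; u) = P(y_x; u)$. This is literally the statement of Lemma~\ref{lemma:unit_indep}, namely $P(Y^d(x) = y \mid e, U = u) = P(Y^d(x) = y \mid U = u)$, so nothing new is required beyond quoting it; I would just note that it rests on Assumption~\ref{asmp:indep} (counterfactual noise $\*E(x)$ is independent of the factual noise $\*E$) together with the observation that the evidence $e$ is an event measurable with respect to the factual noise, whereas $Y^d(x)$ is a function of $\*E(x)$ and of the fixed individual $u$.

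Next I would handle the right equality $P(y_x; u) = P(y \mid x; u)$. This is exactly the Distribution-consistency Rule established earlier (the lemma immediately following the DiscoSCM example), which states that for treatment $X = f_1(E_1; U)$ and outcome $Y = f_2(X, E_2; U)$ one has $X = x, U = u \Rightarrow Y^d(x) \stackrel{d}{=} Y$, i.e. $P(Y^d(x) = y \mid X = x, U = u) = P(Y = y \mid X = x, U = u)$. Combined with the left equality, applied with the specific evidence $e = [x, y']$ restricted to $X = x$ — or more directly, using that by Lemma~\ref{lemma:unit_indep} the quantity $P(Y^d(x) = y \mid U = u)$ does not depend on conditioning on $X = x$ — we get $P(y_x; u) = P(Y^d(x) = y \mid X = x, U = u) = P(Y = y \mid X = x, U = u) = P(y \mid x; u)$. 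Chaining the two equalities completes the argument.

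The only subtle point, and the step I expect to need the most care, is the bookkeeping in the right equality: one must justify that $P(Y^d(x) = y \mid U = u)$ equals $P(Y^d(x) = y \mid X = x, U = u)$ before invoking the Distribution-consistency Rule, since the rule as stated conditions on $X = x$. This again follows from Lemma~\ref{lemma:unit_indep} (taking the evidence to include $X = x$, which is generated by the factual noise), so the whole proof reduces to two applications of Lemma~\ref{lemma:unit_indep} and one application of the Distribution-consistency Rule, with no genuine computation. I would present it in three short lines exactly in that order.
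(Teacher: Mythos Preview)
Your proposal is correct and matches the paper's own proof essentially line for line: the first equality is quoted directly from Lemma~\ref{lemma:unit_indep}, and the second is obtained by applying Lemma~\ref{lemma:unit_indep} with the particular evidence $e=\{X=x\}$ to remove the conditioning on $X=x$, then invoking the Distribution-consistency Rule. The ``subtle point'' you flag is exactly the step the paper makes explicit, so there is nothing to add or change.
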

\begin{proof}
    The first equality is Lemma \ref{lemma:unit_indep}. Regarding the second equality, if we set $e=\{X=x\}$ in Lemma \ref{lemma:unit_indep}  and apply the distribution-consistency rule, we obtain:
    \begin{align*}
        P(Y^d(x) = y|U=u) &= P(Y^d(x) = y|X=x, U=u) \\
        &= P(Y=y|X=x, U=u) \\
        &\delequal P(y|x;u)
    \end{align*}
    This completes the proof.
\end{proof}

It is noteworthy that there is no distinction between individual-level and population-level valuations in the SCM framework. This is evidenced by the statement in \citet{pearl20217} (p. 3) that in an SCM, the vector $\*U = \*u$ can be interpreted as an individual subject or experimental unit, uniquely determining the values of all domain variables. By contrast, the DiscoSCM framework fundamentally disentangles individual semantics from the exogenous variables, where the logic of unit selection is governed by a unit selection variable $U$, independent to the exogenous variable $\*E$.
% the instantiation $U=u$ denotes the selected unit and $\*E$ represents exogenous uncertainty. Consequently, DiscoSCMs introduce a novel paradigm where the logic of unit selection is governed by a unit selection variable $U$, independent to the exogenous variable $\*E$,
Consequently, DiscoSCMs introduce a novel paradigm facilitating the separation of individual-level and population-level calculations. 

\subsection{Illustration with Examples} 

Now, we further illustrate the difference between both frameworks concretely with the probability of consistency (Def. \ref{def:pc}) for Layer 3 valuation. For ease of clarification, we first present the following result, which can be directly derived from Theorem \ref{thm:layer12e} by setting $e=[x, y]$.
\begin{corollary}
\label{lemma:pc}
    For any individual $u$, the probability of consistency
    \begin{align}
        PC(u) = P(y|x; u).
    \end{align}
\end{corollary}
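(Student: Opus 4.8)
The plan is to obtain the corollary as an immediate specialization of Theorem~\ref{thm:layer12e}. Recall that the probability of consistency is defined (Def.~\ref{def:pc}) as $PC(u) = P(y_x \mid y, x; u) = P(Y^d(x) = y \mid X = x, Y = y, U = u)$; that is, it is the individual-level Layer~3 valuation of the counterfactual event $Y^d(x) = y$ conditioned on the specific evidence in which both the treatment and the outcome are observed to take the very values $x$ and $y$ appearing in the counterfactual query.

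First I would set the evidence in Theorem~\ref{thm:layer12e} to be $e = [x, y]$, i.e., the joint event $\{X = x, Y = y\}$, using the same $x$ that indexes the intervention $do(x)$ and the same $y$ that is the queried value of $Y^d(x)$. Theorem~\ref{thm:layer12e} then yields the chain $P(y_x \mid e; u) = P(y_x; u) = P(y \mid x; u)$. By the notational conventions of Def.~\ref{def:semantics_new} and Def.~\ref{def:pc}, the leftmost quantity is exactly $P(Y^d(x) = y \mid X = x, Y = y, U = u) = PC(u)$, so reading off the first and last terms of the chain gives $PC(u) = P(y \mid x; u)$.

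There is essentially no obstacle here beyond bookkeeping: the substantive content has already been established in Lemma~\ref{lemma:unit_indep} (which decouples the counterfactual outcome from any factual evidence given $U = u$) together with the distribution-consistency rule (which identifies $P(Y^d(x) = y \mid X = x, U = u)$ with $P(Y = y \mid X = x, U = u)$). The only point that warrants care is confirming that the self-referential substitution is legitimate -- that nothing in Theorem~\ref{thm:layer12e} requires the evidence $e$ to be disjoint from or independent of the query $(x, y)$. This is immediate because the conditioning-on-$U=u$ independence in Lemma~\ref{lemma:unit_indep} holds for an \emph{arbitrary} evidence event $e$, in particular for $e = [x, y]$.
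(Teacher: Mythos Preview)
Your proposal is correct and matches the paper's own argument essentially verbatim: the paper states that the corollary ``can be directly derived from Theorem~\ref{thm:layer12e} by setting $e=[x, y]$,'' which is precisely the specialization you carry out.
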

\begin{example}[Continue with Example \ref{eg:incentive}]
\label{eg:pc}
    In the SCM framework, the probability of consistency degenerates to constant 1 for any unit $u$, due to the consistency rule. However, within the DiscoSCM framework, it is a parameter that holds significance. Specifically, following Example \ref{eg:incentive}, the probability of consistency $\text{PC}(u)$ can be obtained by Corollary \ref{lemma:pc}:
    \begin{equation}
    \label{eq:probconsistz2t}
         \text{PC}(u)\delequal P(t_s|t, s; u) = P(t|s;u) =
        \begin{cases}
          0.5 & \text{if}\ s_u = 0, \\
          1 & \text{if}\ s_u = 1, \\
          g(t, x) \in (0, 1) & \text{if}\ s_u = 2,
        \end{cases}   
    \end{equation}
    where $g(\cdot, \cdot)$ is a certain given function. In other words, for any user in the random group, the probability of consistency is 0.5, indicating the assigned treatment value is attributed to randomness. In the pure strategy group, the probability of consistency equals 1, implying the assigned value of $T$ is attributed to a deterministic strategy. For the mixed strategy group, the probability of consistency falls within the range $(0, 1)$. This indicates that the value of $T$ is attributed partly to the strategy and partly to randomness.
\end{example}

Moreover, we demonstrate our main results on Layer valuations at both individual and population level with a synthetic example.

\begin{example}
    Consider a population \( U = \{1, 2, \ldots, 200\} = S \cup S' \), with observed data 
    \[\{(t_1, y_1), (t_2, y_2), \ldots, (t_{200}, y_{200})\}, \] 
    where the first 100 units belong to \( S \) and the rest to \( S' \). The two subpopulations exhibit different causal effects, and the distribution of treatment and outcome for these 200 units is summarized in Table \ref{tab:eg_200units}.
    
    \begin{table}[h!]
        \centering
        \caption{Observed data for \((t_u, y_u)\) with 200 units, where numbers in blue represent units in \( S \), while those in black denote units in \( S' \).}
        \[
        \begin{array}{c|ccccc}
        \hline
        t_u \backslash y_u & 2 & 1 & 0 & -1 & -2 \\ \hline
        -1 & 0 & 0 & 25 & 25 + \textcolor{blue}{20} & \textcolor{blue}{80} \\
        1 & 25 & 25 & 0 & 0 & 0 \\
        \hline
        \end{array}
        \]
        \label{tab:eg_200units}
    \end{table}

    % Layer 1 valuation at individual-level 是一个常见的及其学习预测问题。也就是说, 
    % 在一般的场景下, 我们还会有每个个体的各种特征, 并且经常能够训练一个机器学习预测模型, 准确的预测 $P(y, t), P(t)$ for each individual $u$, hence we can predict $P(y|t;u)$ with observation data。

    \textit{Individual-level Valuations.} Layer 1 valuation at the individual level is a common predictive learning problem. In typical scenarios, we have various features for each individual and often can train a machine learning model to accurately predict \( P(y, t) \) and \( P(t) \) for each individual \( u \). Consequently, Layer 1 valuation \( P(y|t;u) \) can be obtained using observational data. Furthermore, by Theorem \ref{thm:layer12e} , Layer 1, 2, and 3 valuations are equivalent at the individual level, thereby resolving the individual-level Layer valuations.

    Assuming the ground truth causal mechanisms (\(T\) causing \(Y\)) that generate the observational dataset are known or learned as:     
    \begin{equation}
    \label{eq:synthetic}
        \begin{cases}
            Y_u = 2T_u + E_u, \quad P(T_u=1) = 0, \quad E_u \sim B(0, 1, 0.2) & \text{if } u \in S, \\
            Y_u = T_u + E_u , \quad P(T_u=1) = 0.5, \quad E_u \sim B(0, 1, 0.5) & \text{if } u \in S'.
        \end{cases}
    \end{equation}
    Here, the treatment \( t_u \) takes values in \( \{-1, 1\} \) for any individual \( u \), and \(B(0, 1, p)\) denotes a binomial distribution with success probability \(p\). We can then theoretically deduce the individual-level valuation using Eq. \eqref{eq:synthetic}. Specifically, for \( u \in S \) where \( E_u \sim B(0, 1, 0.2) \):
    \begin{align*}
        P(y|t;u) &= P(Y=y|T=t, U=u) \\
                 &= P(2T+E=y|T=t,U=u) \\
                 &= P(E_u = y - 2t).
    \end{align*}
    Similarly, for \( u \in S' \) where \( E_u \sim B(0, 1, 0.5) \), the valuation is given by \( P(y|t;u) = P(E_u = y - t) \). Furthermore, we can easily calculate the individual treatment effects (ITE) for units in \( S \) and \( S' \), demonstrating heterogeneity treatment effects across sub-populations:

    \begin{equation*}
        ITE(u) \triangleq E[Y(1)] - E[Y(-1)] = 
        \begin{cases}
            4 & \text{if } u \in S, \\
            2 & \text{if } u \in S'.
        \end{cases}
    \end{equation*}
    
    \textit{Population-level Valuations.}  In the context of DiscoSCM, individual-level valuations are considered as primitives, while population-level valuations are derived from these primitives. The computation of layer valuations at the population level can be conducted using Theorem \ref{algo:population}. Consider the following Layer valuation:
    % 在 DiscoSCM 下, individual-level valuation are primitives and population-level valuations are derivatives。 layer valuations at population-level 可以通过 Theorem \ref{algo:population} 来进行计算。例如：
   \begin{align*}
        P(Y=-1|T=-1) 
        &= \sum_u P(Y=-1|T=-1;u) P(u|T=-1) \qquad\qquad\qquad \textbf{(Layer 1)} \\
        &= \frac{1}{5} \times \frac{2}{3} + \frac{1}{2} \times \frac{1}{3} \\
        &= \frac{3}{10},
    \end{align*}
    where the probability measure \( P( \cdot |T=-1) \) assign each $u$ with probability $\frac{2}{3} * \frac{1}{200}$ if $u \in S$ else $\frac{2}{3} * \frac{1}{200}$. This theoretical computation is consistent with empirical estimates obtained from observed data:
    $$
    P(Y=-1|T=-1) = \frac{\# \text{ of cases where } y_u = -1 \text{ given } t_u = -1}{\text{\# of cases with } t_u = -1} = \frac{25+20}{50 + 100} = \frac{3}{10}, 
    $$
    where the numerator represents the count of units with \( y_u = -1 \) when \( t_u = -1 \), and the denominator is the total count of units with \( t_u = -1 \). Moreover, we can similarly derive various Layer valuation quantities as follows:
    % \begin{align*}
    %     P(Y=-1|T=-1) &= \frac{1}{5} \times \frac{2}{3} + \frac{1}{2} \times \frac{1}{3} &&= \frac{3}{10},  &&&\textbf{Layer 1}\\
    %     P(Y^d(t=-1) = -1) &= \frac{1}{5} \times \frac{1}{2} + \frac{1}{2} \times \frac{1}{2} &&= \frac{7}{20}, &&&\textbf{Layer 2}\\
    %     P(Y^d=-1|T=-1) &= \frac{1}{5} \times \frac{2}{3} + \left(\frac{1}{2} \times \frac{1}{2}\right) \times \frac{1}{3} &&= \frac{13}{60}, &&&\textbf{Layer 3} \\
    %     P(Y^d(t=-1) = -1|T=-1, Y=-1) &= \frac{1}{5} \times \frac{4}{9} + \frac{1}{2} \times \frac{5}{9} &&= \frac{11}{30}. &&&\textbf{Layer 3}
    % \end{align*}    
    
    \begin{align*}
        % P(Y=-1|T=-1) &= \frac{1}{5} \times \frac{2}{3} + \frac{1}{2} \times \frac{1}{3} &&= \frac{3}{10}, &&&\textbf{(Layer 1)} \\
        P(Y^d(t=-1) = -1) &= \frac{1}{5} \times \frac{1}{2} + \frac{1}{2} \times \frac{1}{2} &&= \frac{7}{20}, &&&\textbf{(Layer 2)} \\
        P(Y^d=-1|T=-1) &= \frac{1}{5} \times \frac{2}{3} + \left(\frac{1}{2} \times \frac{1}{2}\right) \times \frac{1}{3} &&= \frac{13}{60}, &&&\textbf{(Layer 3)} \\
        P(Y^d(t=-1) = -1|T=-1, Y=-1) &= \frac{1}{5} \times \frac{4}{9} + \frac{1}{2} \times \frac{5}{9} &&= \frac{11}{30}. &&&\textbf{(Layer 3)}
    \end{align*}
    It is noteworthy that for an SCM describing the same system, Layer 1 and 2 valuations in the SCM are the same as those in the DiscoSCM, as stated in Theorem \ref{thm:relation_l12}. However, the two Layer 3 valuations within the DiscoSCM, \( P(Y^d=-1|T=-1) \) and \( P(Y^d(t=-1) = -1|T=-1, Y=-1) \), exhibit discrepancies when compared to their counterparts in the SCM. Specifically, we observe the following differences:
    $$P(Y=-1|T=-1) = \frac{3}{10},$$ 
    and by consistency rule, 
    $$P(Y(t=-1) = -1|T=-1, Y=-1)=1.$$
    This contrast highlights the differences in Layer 3 valuations between the two frameworks.
    
\end{example}

\subsection{Further Exploration of the Unit Selection Variable}
So far, we have rigorously defined Layer valuations and proposed and proved theorems pertaining to both population-level and individual-level valuations in the DiscoSCM framework. We have also illustrated their implications and computation methods through specific examples. A notable distinction between DiscoSCMs and SCMs is the explicit inclusion of a variable \(U\) to represent individuals. Finally, we introduce two significant theoretical results for causal identification, elucidating the role of \(U\).

\begin{theorem}
    For a DiscoSCM with binary treatment \(T\), outcome \(Y\), we have:
    \begin{align}
        Y(t) \perp T | e(U)
    \end{align}
    where \(t\) is a given treatment value and \(e(\cdot)\) is the propensity function.
\end{theorem}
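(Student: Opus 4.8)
The plan is to interpret the claim $Y(t) \perp T \mid e(U)$ through the individual semantics that DiscoSCM provides, where $e(\cdot)$ is the propensity function, which I read as $e(u) = P(T = 1 \mid U = u)$. The key observation is that conditioning on $e(U)$ partitions the population into classes of units sharing a common propensity value, so within such a class the randomness of $T$ is, by construction, determined entirely by the structural equation $T = f_1(E_1; U)$ together with the exogenous noise $E_1$, while the counterfactual outcome $Y(t) = Y^d(t)$ is a function of the \emph{counterfactual} noise $E_2(t)$ (and $U$). First I would condition on $U = u$ and expand $P(Y(t) = y, T = t' \mid U = u)$ using the structural equations, writing it as $P(f_2(t, E_2(t); u) = y,\; f_1(E_1; u) = t')$. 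By Assumption~\ref{asmp:indep} (counterfactual noises independent of factual noise) together with the independence of $\*E$ from $U$, the event on $E_2(t)$ and the event on $E_1$ factorize, so this equals $P(Y^d(t) = y \mid U = u)\, P(T = t' \mid U = u)$. This is exactly conditional independence of $Y^d(t)$ and $T$ given $U = u$, i.e.\ a sharper statement than required at the individual level, and it mirrors Lemma~\ref{lemma:unit_indep}.

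Next I would lift this from the unit level to the $e(U)$ level. Fix a propensity value $p$ in the range of $e(\cdot)$ and condition on the event $\{e(U) = p\}$. Using the total probability formula over the units $u$ with $e(u) = p$,
\begin{align*}
P(Y^d(t) = y,\, T = t' \mid e(U) = p)
&= \sum_{u:\, e(u) = p} P(Y^d(t) = y,\, T = t' \mid U = u)\, P(U = u \mid e(U) = p) \\
&= \sum_{u:\, e(u) = p} P(Y^d(t) = y \mid U = u)\, P(T = t' \mid U = u)\, P(U = u \mid e(U) = p).
\end{align*}
Now the crucial point is that $P(T = t' \mid U = u)$ depends on $u$ only through $e(u)$: by definition $P(T = 1 \mid U = u) = e(u) = p$ on this set, so $P(T = t' \mid U = u) = P(T = t' \mid e(U) = p)$ is constant in the sum and can be pulled out. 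What remains, $\sum_{u:\, e(u) = p} P(Y^d(t) = y \mid U = u)\, P(U = u \mid e(U) = p)$, is precisely $P(Y^d(t) = y \mid e(U) = p)$ by the same total-probability identity. Hence $P(Y^d(t) = y,\, T = t' \mid e(U) = p) = P(Y^d(t) = y \mid e(U) = p)\, P(T = t' \mid e(U) = p)$, which is the asserted conditional independence.

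The main obstacle — and the step that deserves the most care — is justifying the factorization at the unit level, namely that conditioning on $U = u$ genuinely decouples $Y^d(t)$ from $T$. This rests entirely on Assumption~\ref{asmp:indep}: $Y^d(t)$ lives in a counterfactual world driven by $E_2(t)$, whereas $T$ is a factual-world quantity driven by $E_1$, and the two noise sources are independent given $u$. One must be slightly careful that $T$ here is the \emph{factual} treatment, not a counterfactual one, so that the relevant independence is between $\*E(t)$ and $\*E$ rather than a self-independence; the argument in Lemma~\ref{lemma:unit_indep} already handles exactly this configuration, so I would cite it directly rather than re-derive it. A secondary subtlety is purely measure-theoretic: if $U$ is continuous the sums above should be read as conditional expectations / integrals against $P(u \mid e(U) = p)$, but nothing in the logic changes. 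Once the unit-level independence and the constancy of the propensity on level sets of $e(\cdot)$ are in hand, the population-level statement follows by the routine averaging displayed above.
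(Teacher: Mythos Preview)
Your proof is correct. Both your argument and the paper's rest on the same two ingredients: (i) the unit-level independence $Y^d(t) \perp T \mid U$, which you obtain from the structural equations together with Assumption~\ref{asmp:indep} (and note is essentially Lemma~\ref{lemma:unit_indep}), while the paper obtains it by invoking Theorem~\ref{thm:layer12e}; and (ii) the fact that $P(T = t' \mid U = u)$ is constant on each level set $\{u : e(u) = p\}$. The difference lies in how the second step is packaged. You factor the joint $P(Y^d(t) = y,\, T = t' \mid e(U) = p)$ by summing over the level set and pulling the constant propensity out of the sum. The paper instead computes $P(T = 1 \mid e(U), Y(t)) = E[T \mid e(U), Y(t)]$ via the tower property, inserting $U$ in the inner conditioning, using $E[T \mid Y(t), U] = E[T \mid U] = e(U)$ from step (i), and collapsing to $E[e(U) \mid e(U), Y(t)] = e(U)$. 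Your route is the direct probability-factorization version of the classical propensity-score argument; the paper's is the conditional-expectation version in the style of Rosenbaum--Rubin. They reach the same conclusion with comparable effort; yours makes the role of the balancing property (constancy of $P(T\mid U)$ on level sets) more explicit, while the paper's is slightly more compact once the tower machinery is assumed.
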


\begin{proof}  
    By Theorem \ref{thm:layer12e}, we have \(P(y_t|t;u) = P(y_t; u)\). Therefore:
    \begin{align*}
        Y(t) \perp T | U \Rightarrow E[T| Y(t), U] = E[T|U]
    \end{align*}
    Therefore, 
    \begin{align*}
        P(T=1|e(U), Y(t)) 
        &= E[T|e(U), Y(t)] \\
        &= E[E[T|e(U), Y(t), U]|e(U), Y(t)] \\
        &=  E[E[T| Y(t), U]|e(U), Y(t)] \\
        &= E[E[T|U]|e(U), Y(t)] \\
        &= E[e(U)|e(U), Y(t)] \\   
        &= e(U)
    \end{align*}
    Because this does not depend on \(Y(t)\),  we have proven that the conditional independence.
\end{proof}

\begin{theorem}
    For a DiscoSCM with discrete treatment \(T\) and outcome \(Y\), 
    \begin{align}
    \label{lemma6}
        E[Y(t^*)] = E\left[\frac{ Y \mathbf{1}_{T=t^*} }{ P(t^*|U)}\right]
    \end{align}
    where \(t^*\) is any given treatment value and \(P(t^*|u)\) is positive for any individual \(U=u\).
\end{theorem}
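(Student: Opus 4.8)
The plan is to prove the identity by conditioning on the unit selection variable $U$ and applying the tower property, which reduces the population-level claim to an identity that holds unit by unit. The key structural input is Theorem~\ref{thm:layer12e}, which states that at the individual level the counterfactual and observational valuations coincide: $P(y_{t^*};u) = P(y\mid t^*;u)$ for every unit $u$. Since $T$ and $Y$ are discrete, I can multiply by $y$ and sum over $y$ to pass from this distributional equality to the corresponding equality of first moments,
\[
    E[Y(t^*)\mid U=u] \;=\; \sum_y y\, P(y_{t^*};u) \;=\; \sum_y y\, P(y\mid t^*;u) \;=\; E[Y\mid T=t^*,U=u],
\]
which is well defined for every $u$ because the hypothesis guarantees $P(t^*\mid u)>0$.

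Next I would expand the right-hand side of \eqref{lemma6}. By the tower property over $U$,
\[
    E\!\left[\frac{Y\,\mathbf{1}_{T=t^*}}{P(t^*\mid U)}\right]
    \;=\; E\!\left[\frac{1}{P(t^*\mid U)}\,E\big[\,Y\,\mathbf{1}_{T=t^*}\mid U\,\big]\right].
\]
I would then rewrite the inner conditional expectation using the definition of conditional expectation, namely $E[Y\,\mathbf{1}_{T=t^*}\mid U=u] = E[Y\mid T=t^*,U=u]\,P(T=t^*\mid U=u)$. Substituting this in and cancelling the factor $P(t^*\mid U)$, which is nonzero by assumption, leaves $E\big[E[Y\mid T=t^*,U]\big]$. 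Applying the unit-level identity from the first paragraph and then the tower property once more yields $E\big[E[Y(t^*)\mid U]\big] = E[Y(t^*)]$, which is exactly the left-hand side.

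I do not expect a genuinely hard step here: once Theorem~\ref{thm:layer12e} is available, the remainder is the classical Horvitz--Thompson / inverse-propensity-weighting manipulation, carried out conditionally on each unit $u$ rather than on a covariate stratum. The only points needing a little care are (i) confirming that discreteness of $T$ and $Y$ legitimizes the passage from the distributional equality $P(y_{t^*};u)=P(y\mid t^*;u)$ to equality of expectations, and (ii) verifying that division by $P(t^*\mid U)$ is harmless everywhere, which is precisely why the positivity hypothesis $P(t^*\mid u)>0$ for all $u$ is imposed. If one wished to state the result for continuous $Y$, the same argument goes through verbatim with integrals replacing sums, provided $Y(t^*)$ is integrable.
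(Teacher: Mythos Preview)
Your proposal is correct and follows essentially the same route as the paper: both arguments condition on $U$, invoke Theorem~\ref{thm:layer12e} to equate $P(y_{t^*};u)$ with $P(y\mid t^*;u)$ at the individual level, and then perform the standard inverse-propensity cancellation. The only cosmetic difference is that the paper writes out the sums over $(u,t,y)$ explicitly on each side, whereas you phrase the same computation via the tower property and conditional-expectation identities.
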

\begin{proof}
    Consider the left-hand side of Equation \ref{lemma6}. By decomposing \(E[Y(t^*)]\), we have:
    \begin{align*}
        E[Y(t^*)] &= E[E[Y(t^*)|U]] \\
        &= \sum_u \sum_y y P(y_{t^*}; u) P(u) \\
        &= \sum_u \sum_y y P(y|{t^*}; u) P(u) \\
        &= \sum_u \sum_y y \frac{P(y|t^*, u)P(t^*|u) P(u) }{P(t^*|u)}   \\
        &= \sum_u \sum_y y \frac{P(u, t^*, y) }{P(t^*|u)},
    \end{align*}
    Conversely, the right-hand side of Equation \ref{lemma6} is elaborated as follows:
    \begin{align*}
        E\left[\frac{ Y \mathbf{1}_{T=t^*} }{ P(t^*|U)}\right] 
        &= \sum_{u, t, y} \frac{y \mathbf{1}_{t=t^*} }{P(t^*|u)} P(u, t, y) \\
        &= \sum_{u, y} \sum_t \frac{y \mathbf{1}_{t=t^*} }{P(t^*|u)} P(u, t, y) \\
        &= \sum_{u} \sum_y \frac{y  }{P(t^*|u)} P(u, t^*, y) .
    \end{align*}
    By synthesizing these derivations, the proof is concluded.
\end{proof}

The above two theorems respectively correspond to Theorems 7.1\&7.18 in \cite{neal2020introduction} related to the propensity score in the PO framework. The key difference is that in our theorems, \(U\) replaces the pre-treatment features that satisfy the unconfoundedness assumption\citep{imbens2015causal}. This suggests that in our DiscoSCM framework, \(U\) plays a role analogous to that of pre-treatment features. Moreover, if we assume the conditional independence between $X$ and $(T, Y)$ given $U$ in the underlying causal graph,  the following theorems hold.
\begin{theorem}
\label{thm:iden_y_t_x}
    For a DiscoSCM with discrete treatment $T$, feature $X$ and outcome $Y$: 
    \begin{align}
        E[Y(t')|X=x'] = E[\frac{ Y \mathbf{1}_{T=t', X=x'} }{ P(t'|U) P(x')}] 
    \end{align}
    where $t'$ is any given treatment value and $P(t'|u)$ is positive for any individual $U=u$.
\end{theorem}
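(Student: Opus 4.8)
The plan is to mimic the proof of the previous theorem (the one establishing $E[Y(t^*)] = E[Y\mathbf{1}_{T=t^*}/P(t^*|U)]$) but now carrying the conditioning event $X=x'$ along, using the assumed conditional independence $X \perp (T,Y) \mid U$ to decouple $X$ from the treatment/outcome machinery. First I would expand the left-hand side by conditioning on $U$: write $E[Y(t')\mid X=x'] = \sum_u E[Y(t')\mid X=x', U=u]\, P(u\mid X=x')$. By Lemma \ref{lemma:unit_indep} (and the fact that $X=x'$ is an event generated by the factual noise), $E[Y(t')\mid X=x', U=u] = E[Y(t')\mid U=u]$, and then Theorem \ref{thm:layer12e} gives $E[Y(t')\mid U=u] = \sum_y y\, P(y\mid t', u)$. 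Also, by the assumed independence $X\perp T \mid U$, we have $P(u\mid X=x') = P(u)\,P(x'\mid u)/P(x')$, and crucially $P(x'\mid u)$ interacts with nothing else since $X$ is independent of $(T,Y)$ given $U$.

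Next I would expand the right-hand side: $E\!\left[\frac{Y\mathbf{1}_{T=t',X=x'}}{P(t'|U)P(x')}\right] = \sum_{u,t,y,x} \frac{y\,\mathbf{1}_{t=t'}\mathbf{1}_{x=x'}}{P(t'|u)P(x')} P(u,t,y,x)$. Summing out $t$ and $x$ via the indicators leaves $\sum_{u,y} \frac{y}{P(t'|u)P(x')} P(u,t',y,x')$. Now I invoke the conditional independence $X\perp(T,Y)\mid U$ to factor $P(u,t',y,x') = P(u)\,P(t',y\mid u)\,P(x'\mid u) = P(u,t',y)\,P(x'\mid u)$. Substituting and using $P(x'\mid u) = P(u\mid x')P(x')/P(u)$ converts the expression into $\sum_{u,y} \frac{y}{P(t'|u)} P(t',y\mid u) P(u\mid x')$, and writing $P(t',y\mid u) = P(y\mid t',u)P(t'\mid u)$ cancels the $P(t'|u)$ in the denominator, yielding $\sum_{u,y} y\, P(y\mid t',u)\, P(u\mid x')$, which matches the left-hand side computed above.

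The main obstacle, such as it is, is being careful about which independence facts are \emph{assumed} (the graphical condition $X\perp(T,Y)\mid U$, explicitly hypothesized in the theorem statement) versus which are \emph{derived} (the independence of the counterfactual $Y(t')$ from the factual evidence given $U$, which comes from Lemma \ref{lemma:unit_indep} / Assumption \ref{asmp:indep}), and making sure the abduction step $P(u\mid X=x')$ is handled consistently with both. In particular, one must justify that conditioning on $X=x'$ is legitimate evidence that does not perturb the counterfactual distribution of $Y(t')$ at the unit level — this is exactly Lemma \ref{lemma:unit_indep} with $e = \{X=x'\}$ — and then the positivity assumption $P(t'|u)>0$ guarantees all the divisions are well-defined. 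Everything else is bookkeeping: matching the two sums term-by-term over $u$ and $y$. I would present it as a two-sided computation, reducing both LHS and RHS to the common form $\sum_{u}\sum_{y} y\,P(y\mid t';u)\,P(u\mid x')$.
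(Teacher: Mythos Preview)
Your proposal is correct and follows essentially the same approach as the paper: a two-sided computation that conditions on $U$, invokes Theorem \ref{thm:layer12e} (via Lemma \ref{lemma:unit_indep}) to strip the factual evidence from the counterfactual, and uses the assumed conditional independence $X\perp(T,Y)\mid U$ to factor the joint. The only cosmetic difference is the meeting point: the paper pushes the LHS forward to the joint form $\sum_{u,y} y\,P(u,t',y,x')/[P(t'|u)P(x')]$ and stops the RHS there, whereas you keep the LHS at $\sum_{u,y} y\,P(y\mid t';u)\,P(u\mid x')$ and simplify the RHS further to match---your version is arguably tidier, but the ingredients and logic are identical.
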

\begin{proof}
   On one hand,  we have:
    \begin{align*}
        E[Y(t')|X=x'] &= E[E[Y(t')|X=x', U]|X=x'] \\
        &= \sum_u \sum_y y P(y_{t'}|x'; u) P(u|x') \\
        &= \sum_u \sum_y y P(y|{t'}; u) P(u|x') \\
        &= \sum_u \sum_y y \frac{P(y|t', u)P(t'|u) P(u) }{P(t'|u)} \cdot \frac{P(x'|u, t', y)}{P(x'|u)}  \cdot \frac{P(x'|u)}{P(x')} \\
        &= \sum_u \sum_y y \frac{P(u, t', y) }{P(t'|u)} \cdot \frac{P(x'|u, t', y)}{P(x')} \\
        &= \sum_u \sum_y y \frac{P(u, t', y, x') }{P(t'|u)P(x')} 
    \end{align*}
    \footnote{The equation $P(x'|u, t', y) = P(x'|u)$ is derived by from the conditional independence.} The second equality holds by Theorem \ref{thm:layer12e}, while the other equalities is grounded in basic derivations and property of probability. 

    On the other hand, as the expectation function of random variables $T, X, Y, U$, 
    \begin{align*}
        E[\frac{ Y \mathbf{1}_{T=t', X=x'} }{ P(T=t'|U) P(X=x')}] 
        &= \sum_{u, t, y, x} \frac{y \mathbf{1}_{t=t', x=x'} }{P(t'|u) P(x')} P(u, t, y, x) \\
        &= \sum_{u, y} \sum_t \sum_x \frac{y \mathbf{1}_{t=t'} \cdot \mathbf{1}_{x=x'} }{P(t'|u)P(x')} P(u, t, y, x) \\
        &= \sum_{u, y} \sum_t \frac{y \mathbf{1}_{t=t'} }{P(t'|u)P(x')} P(u, t, y, x') \\
        &= \sum_u \sum_y \frac{y}{P(t'|u)P(x')} P(u, t', y, x')
    \end{align*}
    Combining the above two formulas, we complete our proof.
\end{proof}

\begin{theorem}
\label{thm:iden_y_t_x}
    For a DiscoSCM with discrete treatment $T$, feature $X$ and outcome $Y$: 
    \begin{align}
        E[Y(t)|X=x, T=t, Y=y] = E[\frac{ Y \mathbf{1}_{T=t, X=x} }{ P(t, x)} \cdot \frac{P(y|t; U)}{ P(y|t, x)} ] 
    \end{align}
    where $x, t, y$ are the observed values of features, treatment and outcome.
\end{theorem}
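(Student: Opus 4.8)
The plan is to expand both sides of the identity into one and the same weighted sum over units, namely $\sum_u E[Y\mid T=t,U=u]\,P(u\mid e)$ with $e=[X=x,T=t,Y=y]$, and then to observe that the two expansions coincide. The ingredients are: Theorem~\ref{algo:population} (the population-level valuation), which I will apply to expectations rather than probabilities simply by multiplying $P(Y^d(t)=y'\mid e)=\sum_u P(y'_t;u)P(u\mid e)$ by $y'$ and summing over $y'$; Theorem~\ref{thm:layer12e}, which gives $P(y'_t;u)=P(y'\mid t;u)=P(Y=y'\mid T=t,U=u)$ for every value $y'$, hence $E[Y^d(t)\mid U=u]=E[Y\mid T=t,U=u]$; and the assumed conditional independence $X\perp(T,Y)\mid U$ in the underlying graph, which is exactly what justifies both $P(x\mid u,t,y)=P(x\mid u)$ and $E[Y\mid U=u,X=x,T=t]=E[Y\mid U=u,T=t]$.

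First I would handle the left-hand side. Applying Theorem~\ref{algo:population} in its expectation form and then Theorem~\ref{thm:layer12e} yields
\[
E[Y(t)\mid X=x,T=t,Y=y]=\sum_u E[Y\mid T=t,U=u]\,P(u\mid e).
\]
It then remains to identify the posterior $P(u\mid e)$. By Bayes' rule $P(u\mid e)=P(u,x,t,y)/P(x,t,y)$; the conditional independence $X\perp(T,Y)\mid U$ factorizes the numerator as $P(u,x,t,y)=P(u)\,P(x\mid u)\,P(t\mid u)\,P(y\mid t;u)$, while the denominator is $P(t,x)\,P(y\mid t,x)$ by the chain rule. Hence
\[
P(u\mid e)=\frac{P(u)\,P(x\mid u)\,P(t\mid u)\,P(y\mid t;u)}{P(t,x)\,P(y\mid t,x)}.
\]

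Next I would expand the right-hand side, which is an ordinary expectation over the factual variables $(U,X,T,Y)$ with $t,x,y$ held fixed. The indicator $\mathbf{1}_{T=t,X=x}$ collapses the sums over the factual treatment and feature, and summing the factual outcome against its value gives $\sum_{y'}y'\,P(u,x,t,y')=P(u,x,t)\,E[Y\mid U=u,X=x,T=t]$. Invoking the conditional independence once more replaces $P(u,x,t)$ by $P(u)P(x\mid u)P(t\mid u)$ and $E[Y\mid U=u,X=x,T=t]$ by $E[Y\mid U=u,T=t]$. Pulling out the constant $1/(P(t,x)P(y\mid t,x))$ and the factor $P(y\mid t;u)$, the right-hand side becomes $\sum_u E[Y\mid T=t,U=u]\,P(u\mid e)$ with precisely the posterior computed above, so the two sides agree.

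The main obstacle is bookkeeping rather than anything conceptual: one must carefully distinguish the \emph{fixed} observed value $y$ that appears inside $P(y\mid t;U)$ from the \emph{random} factual outcome $Y$ multiplying the indicator on the right-hand side, and one must invoke $X\perp(T,Y)\mid U$ in exactly the two places it is needed — to factor the joint $P(u,x,t,y)$ on the left, and to reduce both $P(u,x,t)$ and $E[Y\mid U=u,X=x,T=t]$ on the right. Everything else is the total-probability decomposition already used in the proofs of Theorems~\ref{algo:population} and~\ref{thm:layer12e}.
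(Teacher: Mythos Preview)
Your proposal is correct and follows essentially the same route as the paper: both arguments expand the left- and right-hand sides into a common sum of the form $\sum_{u}\sum_{y^{*}}y^{*}P(u,t,y^{*},x)\cdot\dfrac{P(y\mid t;u)}{P(t,x)\,P(y\mid t,x)}$ using Theorem~\ref{thm:layer12e} on the left and the indicator collapse on the right, together with the assumed $X\perp(T,Y)\mid U$. Your organization---first isolating $P(u\mid e)$ via Bayes and then recognizing the same posterior on the right---is a slightly tidier packaging of the same algebra the paper carries out line by line.
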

\begin{proof}
   On one hand,  we have:
    \begin{align*}
        &E[Y(t)|X=x, T=t, Y=y] \\
        &= E[E[Y(t)|X=x, T=t, Y=y, U]|X=x, T=t, Y=y] \\
        &= \sum_u \sum_{y^*} {y^*} P(y^*_{t}|x, t, y; u) P(u|x, t, y) \\
        &= \sum_u \sum_{y^*} {y^*} P({y^*}|{t}; u) P(u|x, t, y) \\
        &= \sum_u \sum_{y^*} {y^*} \frac{P(y^*|t, u)P(t|u) P(u) }{P(t|u)}  \cdot \frac{P(x, t, y|u)}{P(x, t, y)} \\
        &= \sum_u \sum_{y^*} y^* \frac{P(u, t, y^*) }{P(t|u)} \cdot \frac{P(x|u, t, y^*)}{P(x|u, t, y)} \cdot \frac{P(x, t, y|u)}{P(x, t, y)} \\
        &= \sum_u \sum_{y^*} y^* \frac{P(u, t, y^*, x) }{P(t|u)P(x, t, y)} P(t, y|u)
    \end{align*}
    The second equality holds by Theorem \ref{thm:layer12e}, while the other equalities is grounded in basic derivations and property of probability. 
    On the other hand, as the expectation function of random variables $T, X, Y, U$, 
    \begin{align*}
        E[\frac{ Y \mathbf{1}_{T=t, X=x} }{ P(t, x)} \cdot \frac{P(y|t; U)}{ P(y|t, x)} ] 
        &= \sum_{u, t^*, y^*, x^*} \frac{y^* \mathbf{1}_{t=t^*, x=x^*} }{P(t, x) } \cdot \frac{P(y|t; u)}{ P(y|t, x)} P(u, t^*, y^*, x^*) \\
        &= \sum_{u, y^*} \sum_{t^*} \sum_{x^*} \frac{y^* \mathbf{1}_{t=t^*} \cdot \mathbf{1}_{x=x^*} }{P(t, x)} P(u, t^*, y^*, x^*) \cdot \frac{P(y|t; u)}{ P(y|t, x)} \\
        &= \sum_{u, y^*} \sum_{t^*} \frac{y^* \mathbf{1}_{t=t^*} }{P(t, x)} P(u, t^*, y^*, x)  \cdot \frac{P(y|t; u)}{ P(y|t, x)} \\
        &= \sum_u \sum_{y^*} \frac{y^*}{P(t, x)} P(u, t, y^*, x) \cdot \frac{P(y|t; u)}{ P(y|t, x)}
    \end{align*}
    Combining the above two formulas, we complete our proof.
\end{proof}

\section{Conclusion}

When faced with the simplest question, ``Consider an individual with average ability who takes a test and, due to good luck, achieves an exceptionally high score. If this individual were to retake the test under identical external conditions, what score will he obtain? An exceptionally high score or an average score?'' your choice fundamentally determines whether you should employ the traditional causal modeling framework or the DiscoSCM. The traditional causal modeling framework relies on the consistency rule, which would assure the same exceptionally high score in this situation. In contrast, if you choose our framework for causal inference, you will end up predicting an average score with a high probability rather than an exceptionally high score. Furthermore, you can predict the probability of consistency for top students and average ability students achieving high scores on retaking a test, which will be different. This reveals the greater model capacity within our framework, offering a more nuanced understanding of outcomes and the external factors that influence them. 

In essence, our goal is to advance the field of causal inference by presenting a novel framework that addresses existing limitations in counterfactual modeling. By establishing the distribution-consistency assumption and crafting the corresponding DiscoSCM framework and methodologies, we aspire to ignite further exploration of counterfactual modeling, ultimately enhancing our understanding of causal relationships and their real-world applications. At least, this paper serves as a comprehensive introduction to DiscoSCM, marked by its precise formalization and clarity of concepts.

\subsection{Discussion}

\textbf{1. What does ``counterfactual'' mean and what is its essence?} Counterfactuals encompass three fundamental lens: event, variable, and query. Firstly, let us start with the most basic concept: the event. A counterfactual event, i.e., Layer 3 event, is an event associated with at least two variables residing in different worlds (i.e., submodel or interventional model). In contrast, the events in Layers 1 and 2 are related to a single world: the former pertains to the factual world, while the latter concerns any interventional world (including the case of an empty intervention). Secondly, from the viewpoint of variable, a counterfactual variable refers to the one within a counterfactual world. Lastly, from the PCH perspective, a counterfactual query, i.e., Layer 3 valuation, refers to the probability of a counterfactual event or the parameters of different variables from at least two worlds, e.g., correlation of $Y(0)$ and $Y(1)$.

% 我并不知道哲学上的 determinism 是否正确, 但是我们认为 determinism modeling is unrealistic,  A model is an abstract of the nature phenomenal, but there are irreducible complexity that can not be compressed.  DiscoSCM 可以作为不可约化复杂的 reducible pocket, 作为 probabilistic modeling of determinism ground-truth.
\textbf{2. Consistency or distribution-consistency?} Such questions similar to the aforementioned test score example do not have a correct or incorrect answer, and the subtlety lies in the interpretation of ``under identical external conditions''. Current causal modeling frameworks treat good fortune as an external condition and thus would predict an exceptionally high score by the consistency rule. If one's choice is to believe that good fortune does not necessarily reoccur, upon reverting to the past to retake the test, this individual would most likely achieve an average score due to their average ability; in this case, abandoning the consistency assumption aligns with that choice. Advocates of consistency might contend: ``An empirically minded scientist might say that, once we have data, the fact that some potential outcomes must equal their observed values in the data is a good thing; it is the information we have gained from the data.'' However, we prefer distribution-consistency and would argue: ``An empirically minded scientist might prefer to maximize the likelihood of the observed values of variables in the data, rather than imposing equality constraints on them. Therefore, the observed value of potential outcomes in the data should be consistent with their distribution, rather than being strictly equated''. It is plausible to conjecture that Pearl might favor consistency over distribution-consistency given his advocacy for determinism \citep{pearl2009causality}.

\textbf{3. Actual existence of potential response and determinism?} From the philosophical perspective: Potential responses, fundamental to causal inference, indeed face serious philosophical objections regarding their actual existence. Specifically, this existence was challenged by Dawid \citep{geneletti2011defining, dawid2023personalised}, who argued that potential responses are far from fundamental and are entirely unnecessary. He reasoned, \emph{``Only if we take a fully Laplacean view of the universe, in which the future of the universe is entirely determined by its present state and the laws of Physics, does this make any sense at all—and even then, it is difficult to incorporate the whims of an unconstrained external agent who decides whether or not to give treatment, or to account for the effect of external conditions arising after treatment. Whether or not we believe in a deep-down deterministic universe, our predictions of the future can only be based on the limited information we do have at our disposal, and must necessarily be probabilistic.''} In our DiscoSCM, we have perfectly addressed Dawid's concerns by introducing counterfactual noises  $\*E(\*x)$, resulting in probabilistic unit counterfactual outcomes, such as \(Y^d_u(x)\) in Eq. \eqref{eq:unit_po}.  If one firmly believes in determinism and denies the distribution-consistency assumption, a question arises: does such adherence lead to more convenient and practical models? After all, all models are wrong, but some are useful \cite{box1979all}. Hence, our preference is to explore how a model, which might be incorrect, can be useful. While the philosophical correctness of determinism remains a subject of debate, we posit that deterministic modeling is unrealistic. A model serves as an abstraction of natural phenomena, and there exists irreducible computational complexity that cannot be compressed. DiscoSCM might function as a ``reducible pocket''\footnote{The term "reducible pocket" is indeed a concept proposed by Stephen Wolfram, who claimed finding pockets of reducibility is the story of science. We use this metaphor to describe the predictive capabilities of DiscoSCM on modeling complex underlying causal mechanisms.} for this irreducible reality, serving as a probabilistic causal modeling approach for deterministic causal ground-truth.

% \textbf{1. what does counterfactual mean? 它的本质是什么？} 它包含三个基本维度 event, variable and query. First, 我们从最基本的概念 event 出发, counterfactual event,  i.e., Layer 3 event? 他是一个 event related to more than two variables resides in different worlds (or submodels). In contrast,  Layer 1 and 2 的 event related to just one world, the former 是关于 observed world, and the later is about any intervential world(even empty intervention). Second, 我们考虑随机变量维度的意义, counterfactual variable 是 counterfactual world, i.e., submodel or intervential model 中的随机变量。Lastly, 从 the Pearl Causal Hierarchy 的视角来看, counterfactual query 是指 the probabiltiy of counterfactual event, or paramters of different variables from two or more worlds.

\bibliographystyle{plainnat}
\bibliography{references.bib}

\begin{thebibliography}{38}
\providecommand{\natexlab}[1]{#1}
\providecommand{\url}[1]{\texttt{#1}}
\expandafter\ifx\csname urlstyle\endcsname\relax
  \providecommand{\doi}[1]{doi: #1}\else
  \providecommand{\doi}{doi: \begingroup \urlstyle{rm}\Url}\fi

\bibitem[Ai et~al.(2022)Ai, Li, Gong, Yu, Xue, Zhang, Zhang, and
  Jiang]{ai2022lbcf}
Meng Ai, Biao Li, Heyang Gong, Qingwei Yu, Shengjie Xue, Yuan Zhang, Yunzhou
  Zhang, and Peng Jiang.
\newblock Lbcf: A large-scale budget-constrained causal forest algorithm.
\newblock In \emph{Proceedings of the ACM Web Conference 2022}, pages
  2310--2319, 2022.

\bibitem[Angrist()]{angrist91g}
J~Angrist.
\newblock G. imbens, and d. rubin (1996).
\newblock \emph{Identification of causal effects using instrumental variables.
  Journal of the American Statistical Association}, 91\penalty0 (434):\penalty0
  444--455.

\bibitem[Angrist et~al.(1996)Angrist, Imbens, and
  Rubin]{angrist1996identification}
Joshua~D Angrist, Guido~W Imbens, and Donald~B Rubin.
\newblock Identification of causal effects using instrumental variables.
\newblock \emph{Journal of the American statistical Association}, 91\penalty0
  (434):\penalty0 444--455, 1996.

\bibitem[Balke and Pearl(1997)]{balke1997bounds}
Alexander Balke and Judea Pearl.
\newblock Bounds on treatment effects from studies with imperfect compliance.
\newblock \emph{Journal of the American statistical Association}, pages
  1171--1176, 1997.

\bibitem[Bareinboim et~al.(2022)Bareinboim, Correa, Ibeling, and
  Icard]{bareinboim2022pearl}
Elias Bareinboim, Juan~D Correa, Duligur Ibeling, and Thomas Icard.
\newblock On pearl’s hierarchy and the foundations of causal inference.
\newblock In \emph{Probabilistic and causal inference: the works of judea
  pearl}, pages 507--556. 2022.

\bibitem[Box(1979)]{box1979all}
GE~Box.
\newblock All models are wrong, but some are useful.
\newblock \emph{Robustness in Statistics}, 202\penalty0 (1979):\penalty0 549,
  1979.

\bibitem[Chen et~al.(2007)Chen, Geng, and Jia]{chen2007criteria}
Hua Chen, Zhi Geng, and Jinzhu Jia.
\newblock Criteria for surrogate end points.
\newblock \emph{Journal of the Royal Statistical Society Series B: Statistical
  Methodology}, 69\penalty0 (5):\penalty0 919--932, 2007.

\bibitem[Cole and Frangakis(2009)]{cole2009consistency}
Stephen~R Cole and Constantine~E Frangakis.
\newblock The consistency statement in causal inference: a definition or an
  assumption?
\newblock \emph{Epidemiology}, 20\penalty0 (1):\penalty0 3--5, 2009.

\bibitem[Dawid and Senn(2023)]{dawid2023personalised}
A~Philip Dawid and Stephen Senn.
\newblock Personalised decision-making without counterfactuals.
\newblock \emph{arXiv preprint arXiv:2301.11976}, 2023.

\bibitem[Diemert et~al.(2018)Diemert, Betlei, Renaudin, and
  Amini]{diemert2018large}
Eustache Diemert, Artem Betlei, Christophe Renaudin, and Massih-Reza Amini.
\newblock A large scale benchmark for uplift modeling.
\newblock In \emph{KDD}, 2018.

\bibitem[Geneletti and Dawid(2011)]{geneletti2011defining}
Sara Geneletti and A~Philip Dawid.
\newblock \emph{Defining and identifying the effect of treatment on the
  treated}.
\newblock na, 2011.

\bibitem[Goldenberg et~al.(2020)Goldenberg, Albert, Bernardi, and
  Estevez]{goldenberg2020free}
Dmitri Goldenberg, Javier Albert, Lucas Bernardi, and Pablo Estevez.
\newblock Free lunch! retrospective uplift modeling for dynamic promotions
  recommendation within roi constraints.
\newblock In \emph{Fourteenth ACM Conference on Recommender Systems}, pages
  486--491, 2020.

\bibitem[Gutierrez and G{\'e}rardy(2017)]{gutierrez2017causal}
Pierre Gutierrez and Jean-Yves G{\'e}rardy.
\newblock Causal inference and uplift modelling: A review of the literature.
\newblock In \emph{International conference on predictive applications and
  APIs}, pages 1--13. PMLR, 2017.

\bibitem[Holland(1986)]{holland1986statistics}
Paul~W Holland.
\newblock Statistics and causal inference.
\newblock \emph{Journal of the American statistical Association}, 81\penalty0
  (396):\penalty0 945--960, 1986.

\bibitem[Huang and Valtorta(2012)]{huang2012pearl}
Yimin Huang and Marco Valtorta.
\newblock Pearl's calculus of intervention is complete.
\newblock \emph{arXiv preprint arXiv:1206.6831}, 2012.

\bibitem[Imbens(2020)]{imbens2020potential}
Guido~W Imbens.
\newblock Potential outcome and directed acyclic graph approaches to causality:
  Relevance for empirical practice in economics.
\newblock \emph{Journal of Economic Literature}, 58\penalty0 (4):\penalty0
  1129--1179, 2020.

\bibitem[Imbens and Rubin(2015)]{imbens2015causal}
Guido~W Imbens and Donald~B Rubin.
\newblock \emph{Causal inference in statistics, social, and biomedical
  sciences}.
\newblock Cambridge University Press, 2015.

\bibitem[Li and Pearl(2019)]{li2019unit}
Ang Li and Judea Pearl.
\newblock Unit selection based on counterfactual logic.
\newblock In \emph{Proceedings of the Twenty-Eighth International Joint
  Conference on Artificial Intelligence}, 2019.

\bibitem[Li and Pearl(2022)]{li2022unit}
Ang Li and Judea Pearl.
\newblock Unit selection with causal diagram.
\newblock In \emph{Proceedings of the AAAI conference on artificial
  intelligence}, volume~36, pages 5765--5772, 2022.

\bibitem[Markus(2021)]{markus2021causal}
Keith~A Markus.
\newblock Causal effects and counterfactual conditionals: contrasting rubin,
  lewis and pearl.
\newblock \emph{Economics \& Philosophy}, 37\penalty0 (3):\penalty0 441--461,
  2021.

\bibitem[Mueller and Pearl(2023{\natexlab{a}})]{mueller2023personalized}
S~Mueller and J~Pearl.
\newblock Personalized decision making--a conceptual introduction.
\newblock \emph{Journal of Causal Inference}, 11\penalty0 (1):\penalty0
  20220050, 2023{\natexlab{a}}.

\bibitem[Mueller and Pearl(2023{\natexlab{b}})]{mueller2023perspective}
Scott Mueller and Judea Pearl.
\newblock Perspective on ‘harm’in personalized medicine--an alternative
  perspective.
\newblock 2023{\natexlab{b}}.

\bibitem[Neal(2020)]{neal2020introduction}
Brady Neal.
\newblock Introduction to causal inference.
\newblock \emph{Course Lecture Notes (draft)}, 2020.

\bibitem[Neyman(1923)]{neyman1923application}
Jerzy Neyman.
\newblock On the application of probability theory to agricultural experiments.
  essay on principles. section 9.
\newblock \emph{Roczniki Nauk Rolniczych}, 10:\penalty0 1--51, 1923.
\newblock English translation by D. M. Dabrowska and T. P. Speed in Statistical
  Science, 1990, 9, 465--480.

\bibitem[Pearl(2011)]{pearl2011consistency}
J~Pearl.
\newblock On the consistency rule in causal inference: Axiom, definition,
  assumption, or theorem?(vol 21, pg 872, 2010).
\newblock \emph{Epidemiology}, 22\penalty0 (2):\penalty0 285--285, 2011.

\bibitem[Pearl(1995)]{pearl1995causal}
Judea Pearl.
\newblock Causal diagrams for empirical research.
\newblock \emph{Biometrika}, 82\penalty0 (4):\penalty0 669--688, 1995.

\bibitem[Pearl(2009)]{pearl2009causality}
Judea Pearl.
\newblock \emph{Causality}.
\newblock Cambridge university press, 2009.

\bibitem[Pearl(2019)]{pearl2019seven}
Judea Pearl.
\newblock The seven tools of causal inference, with reflections on machine
  learning.
\newblock \emph{Communications of the ACM}, 62\penalty0 (3):\penalty0 54--60,
  2019.

\bibitem[Pearl(2021)]{pearl20217}
Judea Pearl.
\newblock 7.1 causal and counterfactual inference.
\newblock \emph{The handbook of rationality}, page 427, 2021.

\bibitem[Pearl(2022)]{pearl2022direct}
Judea Pearl.
\newblock Direct and indirect effects.
\newblock In \emph{Probabilistic and causal inference: the works of Judea
  Pearl}, pages 373--392. 2022.

\bibitem[Richardson and Robins(2013)]{richardson2013single}
Thomas~S Richardson and James~M Robins.
\newblock Single world intervention graphs (swigs): A unification of the
  counterfactual and graphical approaches to causality.
\newblock \emph{Center for the Statistics and the Social Sciences, University
  of Washington Series. Working Paper}, 128\penalty0 (30):\penalty0 2013, 2013.

\bibitem[Rubin(1974)]{rubin1974estimating}
Donald~B Rubin.
\newblock Estimating causal effects of treatments in randomized and
  nonrandomized studies.
\newblock \emph{Journal of educational Psychology}, 66\penalty0 (5):\penalty0
  688, 1974.

\bibitem[Sarvet and Stensrud(2023)]{sarvet2023perspectives}
Aaron~L Sarvet and Mats~J Stensrud.
\newblock Perspectives on harm in personalized medicine.
\newblock \emph{arXiv preprint arXiv:2302.01371}, 2023.

\bibitem[Tsai and Lu(2009)]{tsai2009customer}
Chih-Fong Tsai and Yu-Hsin Lu.
\newblock Customer churn prediction by hybrid neural networks.
\newblock \emph{Expert Systems with Applications}, 36\penalty0 (10):\penalty0
  12547--12553, 2009.

\bibitem[VanderWeele(2009)]{vanderweele2009concerning}
Tyler~J VanderWeele.
\newblock Concerning the consistency assumption in causal inference.
\newblock \emph{Epidemiology}, 20\penalty0 (6):\penalty0 880--883, 2009.

\bibitem[Weinberger(2023)]{weinberger2023comparing}
Naftali Weinberger.
\newblock Comparing rubin and pearl’s causal modelling frameworks: a
  commentary on markus (2021).
\newblock \emph{Economics \& Philosophy}, 39\penalty0 (3):\penalty0 485--493,
  2023.

\bibitem[Zhang et~al.(2021)Zhang, Li, and Liu]{zhang2021unified}
Weijia Zhang, Jiuyong Li, and Lin Liu.
\newblock A unified survey of treatment effect heterogeneity modelling and
  uplift modelling.
\newblock \emph{ACM Computing Surveys (CSUR)}, 54\penalty0 (8):\penalty0 1--36,
  2021.

\bibitem[Zhao and Harinen(2019)]{zhao2019uplift}
Zhenyu Zhao and Totte Harinen.
\newblock Uplift modeling for multiple treatments with cost optimization.
\newblock In \emph{2019 IEEE International Conference on Data Science and
  Advanced Analytics (DSAA)}, pages 422--431. IEEE, 2019.

\end{thebibliography}

\end{CJK}
\end{document}